%% file: iclr2027_conference.tex
\documentclass{article} 
\usepackage{iclr2027_conference,times}

\input{math_commands.tex}

\usepackage{amsthm}
\newtheorem{proposition}{Proposition}
\usepackage[dvipsnames,svgnames]{xcolor}
\usepackage{hyperref}
\hypersetup{
  colorlinks=true,
  citecolor=SteelBlue,
}
\usepackage{url}
\usepackage{graphicx}
\usepackage{float}

\usepackage{graphicx,booktabs,array,multirow}
\usepackage{colortbl}
\usepackage{pifont}
\definecolor{datasetbg}{RGB}{243,240,233}
\definecolor{oursbg}{RGB}{237,237,237}

\title{Where and When to Force: Routed Forcing for Streaming Avatars}

\author{
\textbf{Zihan Su}$^{1,2}$\thanks{Equal contribution.} \quad
\textbf{Siwen Lu}$^{1}$\footnotemark[1] \quad
\textbf{Junhao Zhuang}$^{2}$\thanks{Corresponding authors.} \quad
\textbf{Zeyue Xue}$^{2}$ \quad
\textbf{Haoyang Huang}$^{2}$ \\
\textbf{Guanghao Li}$^{1}$ \quad
\textbf{Xiaofeng Tan}$^{3}$ \quad
\textbf{Chun Yuan}$^{1}$\footnotemark[2] \quad
\textbf{Nan Duan}$^{2}$ \\[6pt]
$^1$Tsinghua University \quad
$^2$Joy Future Academy, JD \quad
$^3$Southeast University
}

\iclrfinalcopy
\begin{document}

\maketitle

\begin{abstract}
Audio-driven streaming avatar generation requires real-time synthesis of speech-synchronized videos with dynamic and diverse motion. Self Forcing uses Distribution Matching Distillation (DMD) to distill bidirectional video diffusion models into causal, few-step generators for real-time streaming. However, DMD minimizes a reverse KL divergence, which is inherently mode-seeking: it causes the student to discard high-dynamic modes and collapse onto static outputs, compressing both dynamics and diversity of generated videos. We find that this collapse is \emph{region-heterogeneous}: person regions involving pose and gesture variations suffer the largest diversity loss, the audio-driven mouth region shows a small loss, and the background remains nearly stable. Based on this observation, we propose \textbf{Routed Forcing}, which routes the distillation objective by semantic region and noise stage to improve dynamics and diversity while preserving visual quality. Specifically, (1)~\emph{Where to Force: Semantic-Region Routing} applies Data-Forcing Distillation (DFD), which supervises the student with real videos, to the person region where diversity collapse is most severe, while retaining DMD for the mouth and background to preserve lip synchronization and scene stability. (2)~\emph{When to Force: Noise-Stage Routing} activates DFD at high noise stages, where real video serves as effective supervision to inject diverse and dynamic motion patterns. At low noise stages, DMD is used to refine details, avoiding blur and artifacts from spatial differences between real video and student-generated video. Experiments show that Routed Forcing improves dynamics by up to 45\% and diversity by 7--25\% over Self Forcing, while preserving video quality and lip synchronization.
\end{abstract}

\section{Introduction}

Audio-driven portrait video generation produces videos whose motion follows input speech~\citep{suwajanakorn2017synthesizing,zhou2020makeittalk,zhang2023sadtalker,tian2024emo}. Recent methods further extend this paradigm toward causal, real-time streaming generation for interactive applications~\citep{zhen2025teller,zhen2026soulxliveact,cho2026lipforcing,xiao2025knotforcing}.
This task requires the system to satisfy two simultaneous demands: real-time streaming output at interactive frame rates, and temporally rich dynamics including natural upper-body motion, responsive head turns, and coordinated gestures.
Large bidirectional video diffusion models~\citep{yang2024cogvideox,wan2025wan,gan2025omniavatar,wang2025fantasytalking} can already produce high-quality portrait videos, but their multi-step inference cost cannot meet the latency requirements of real-time interaction.
Self Forcing~\citep{huang2025selfforcing} leverages Distribution Matching Distillation (DMD)~\citep{yin2024dmd,yin2024dmd2} to distill these models into causal, few-step streaming generators, enabling real-time generation.

\begin{figure}[t]
  \centering
  \includegraphics[width=\linewidth]{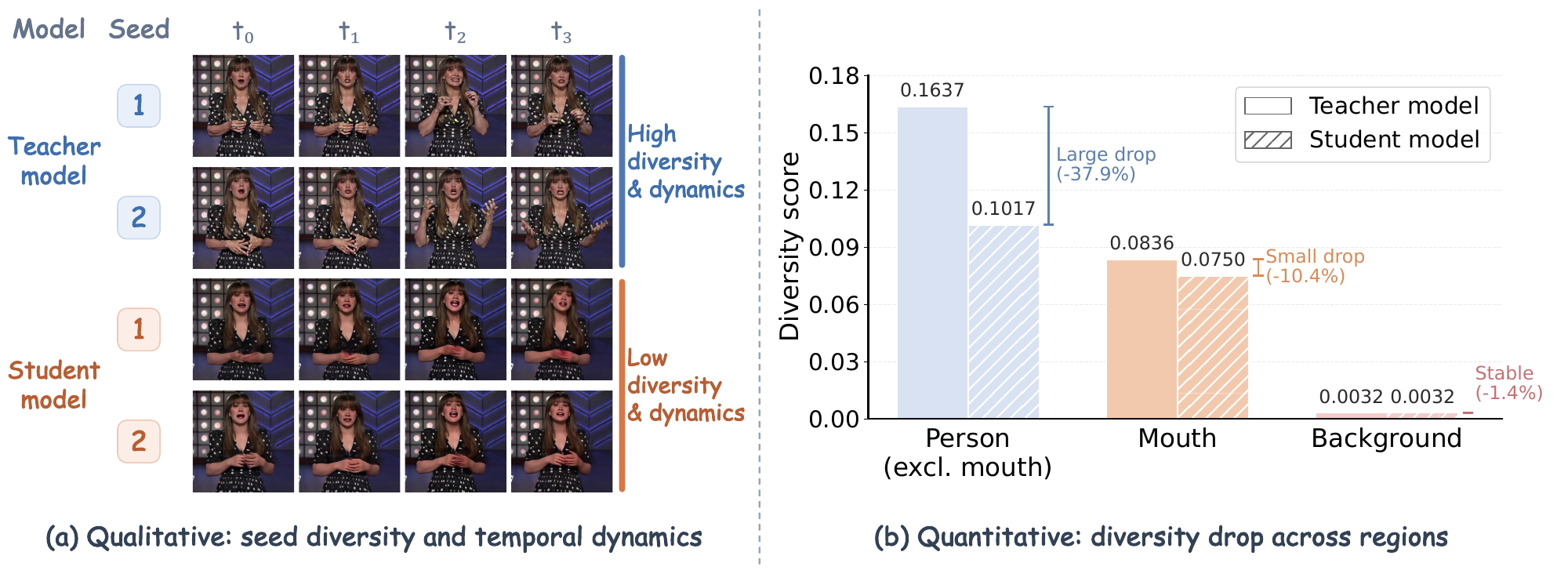}
  \vspace{-1.7em}
  \caption{\textbf{DMD compresses dynamics and diversity, and the collapse is region-heterogeneous.} We use OmniAvatar as teacher model and LiveTalk, distilled from OmniAvatar via DMD, as student model. (a)~Given the same first frame and audio, the teacher produces diverse expressions and dynamic gestures across two seeds, while the student generates similar static trajectories. (b)~Regional diversity (mean pairwise LPIPS across four seeds over 100 first-frame-and-audio pairs from SpeakerVid) reveals that the non-mouth person region suffers the largest drop, the mouth region shows a small drop, and the background remains nearly stable.}
\label{fig:teaser}
\end{figure}

However, DMD compresses the dynamics and diversity of generated videos while reducing sampling steps.
As shown in Figure~\ref{fig:teaser}a, given the same first frame and audio with different random seeds, the original teacher produces varied yet plausible expressions and dynamic gestures, while the DMD-trained student generates highly similar motion trajectories with little hand movement.
In other words, the teacher's diverse and dynamic patterns have been compressed into a single, static output.
The root cause is that DMD minimizes a reverse KL divergence, which is inherently mode-seeking~\citep{yin2024dmd}: it penalizes the student for placing mass outside the teacher's support but imposes no penalty for missing teacher modes.
\textit{A direct consequence of this diversity collapse is that the student discards high-dynamic modes and falls back to the simplest low-dynamic modes~\citep{liu2025dynaforcing}, resulting in generated outputs that consistently lack natural motion variation, especially harmful for avatars that demand lifelike motion.}

Furthermore, we find that this diversity collapse is not uniform across all regions.
We measure diversity across the mouth, non-mouth person region, and background over 100 initial conditions, as shown in Figure~\ref{fig:teaser}b.
The non-mouth person region has many plausible modes for the same speech input, including varied head motions, expressions, and gestures, and suffers the largest diversity loss.
The mouth region is strongly constrained by audio, with small diversity loss.
The background is nearly deterministic and remains stable.
We term this phenomenon \emph{region-heterogeneous collapse}: different semantic regions suffer different degrees of diversity loss under DMD distillation.
This regionally heterogeneous structure indicates that addressing DMD-induced diversity collapse should be differentiated according to regional characteristics.

Recently, Data-Forcing Distillation (DFD)~\citep{chen2026dfd} was proposed to address the diversity collapse caused by DMD.
It introduces real-data supervision by querying the real score on condition-matched real video instead of the student rollout video, injecting the high-dynamic, diverse motion patterns from real video into the student.
DFD can improve dynamics and diversity, but it applies uniformly across all regions and noise stages, ignoring the regional differences and noise-stage differences.
Injecting diversity into regions such as the mouth and background instead disrupts lip accuracy and scene stability. Moreover, at low noise stages, layout differences between real video and the student rollout video (hand position, head angle, body pose) are large, misaligning DFD's gradient with the student's generation and introducing blur and artifacts.

Motivated by the above findings, we propose \textbf{Routed Forcing}, which routes DFD along two dimensions to apply it precisely where and when it is needed.
\textbf{(1)~Where to Force: Semantic-Region Routing.}
Based on regional diversity differences, Routed Forcing confines DFD to the non-mouth person region where collapse is most severe, while mouth and background regions retain DMD to preserve lip accuracy and scene stability.
\textbf{(2)~When to Force: Noise-Stage Routing.}
Routed Forcing further refines DFD application in the non-mouth person region by using DFD and DMD at different noise stages.
At high noise stages, real video and the student rollout video are close in noisy space, so DFD is used to inject diverse and dynamic patterns from real video. At low noise stages, DMD is used instead to refine details on the student's own trajectory, avoiding blur and artifacts from cross-trajectory interference.

Experiments show that Routed Forcing improves dynamics by up to 45\% and diversity by 7--25\% over Self Forcing, while preserving video quality and lip synchronization.
Ablation studies confirm the contribution of each routing dimension: semantic-region routing avoids the damage that applying DFD to all regions inflicts on lip accuracy and scene stability, noise-stage routing further eliminates the blur introduced by layout differences at low noise stages, and their combination yields the best overall performance.
Our contributions are summarized as follows:

\begin{itemize}
\item We propose Routed Forcing for streaming avatar generation, a simple yet effective training strategy which routes the distillation objective by semantic region and noise stage to improve dynamics and diversity while preserving visual quality.
\item We discover \emph{region-heterogeneous collapse} in DMD: diversity loss concentrates in specific semantic regions rather than uniformly across the frame.
\item We propose semantic-region routing, which confines DFD to the non-mouth person region, where collapse is most severe, while retaining DMD for the mouth and background to preserve lip accuracy and scene stability.
\item We propose noise-stage routing, which activates DFD only at high noise stages to inject diverse and dynamic motion patterns from real video, and uses DMD at low noise stages to refine details and avoid blur and artifacts.
\end{itemize}

\section{Related Work}

\textbf{Audio-driven avatar generation.}
Early methods animate facial landmarks or 3D morphable models from audio~\citep{suwajanakorn2017synthesizing,chen2019hierarchical,zhou2020makeittalk,thies2020neural,zhang2023sadtalker}, while recent methods directly generate video frames with diffusion models.
AniPortrait~\citep{wei2024aniportrait} combines facial landmarks with a diffusion model to produce high-quality portrait animation, FantasyTalking~\citep{wang2025fantasytalking} further improves expression and motion quality through a diffusion transformer, and OmniAvatar~\citep{gan2025omniavatar} introduces adaptive body animation into a bidirectional diffusion transformer for high-fidelity portrait generation.
However, these methods require many denoising steps per frame and cannot meet the demands of real-time interaction.
LiveTalk~\citep{chen2025livetalk} achieves real-time streaming by distilling a bidirectional teacher into a causal, few-step student via DMD, but the distillation process introduces diversity collapse.
\textit{Routed Forcing maintains real-time inference while notably improving dynamics and diversity without sacrificing video quality or lip synchronization.}

\textbf{Diffusion model distillation.}
Distilling multi-step diffusion models~\citep{ho2020ddpm,tan2026consistentrft,tan2026pec,su2026safe,su2026generation} into few-step generators is key to enabling real-time inference.
Progressive distillation~\citep{salimans2022progressive} achieves acceleration by iteratively halving the number of sampling steps, while consistency distillation~\citep{song2023consistency,luo2023latent} learns a consistency mapping along the PF-ODE trajectory for single- or few-step generation.
DMD~\citep{yin2024dmd,yin2024dmd2} matches the score functions of the student and teacher distributions, evaluated entirely on the student's own samples, producing a mode-seeking (reverse-KL-like) gradient.
To mitigate the mode collapse of DMD, DFD~\citep{chen2026dfd} queries the frozen teacher score on noised real data, so that teacher modes unvisited by the student still provide supervision.
\textit{In streaming avatar generation, DMD causes diversity collapse, while global DFD introduces blur at low noise stages. Routed Forcing restricts DFD to the person region at high noise stages, improving diversity where collapse is most severe while avoiding lip-sync degradation.}

\textbf{Streaming video generation.}
Self Forcing~\citep{huang2025selfforcing} feeds the student's own generated output as the conditioning for subsequent frames and applies DMD on top to align the student and teacher distributions, achieving causal, few-step streaming video generation.
Building on this paradigm, several works improve the training strategy.
Reward Forcing~\citep{lu2025rewardforcing} applies exponential reward weighting to the DMD gradient, steering the student distribution toward high-reward samples.
DynaForcing~\citep{liu2025dynaforcing} replaces the rollout starting point with a noised real latent and combines dynamics-aware reward weighting to alleviate dynamic collapse.
Causal Forcing~\citep{zhu2026causalforcing} further improves distillation quality by bridging the architectural gap between bidirectional teacher and causal student.
\textit{These methods focus on improving the rollout, reward signal, or initialization, whereas Routed Forcing explores a different angle by differentiating the distillation objective at the level of semantic regions and noise stages.}

\section{Routed Forcing}
\label{sec:method}

We propose Routed Forcing, a simple and effective training strategy that routes the distillation objective by semantic region and noise stage to improve diversity and dynamics while preserving generation quality.
We first introduce preliminaries in \S\ref{sec:background}, then present the region-heterogeneous collapse phenomenon in \S\ref{sec:observation}, semantic-region routing in \S\ref{sec:region-routing}, and noise-stage routing in \S\ref{sec:noise-routing}.
The overall pipeline is illustrated in Figure~\ref{fig:rf-method}.

\begin{figure}[t]
\centering
\includegraphics[width=\linewidth]{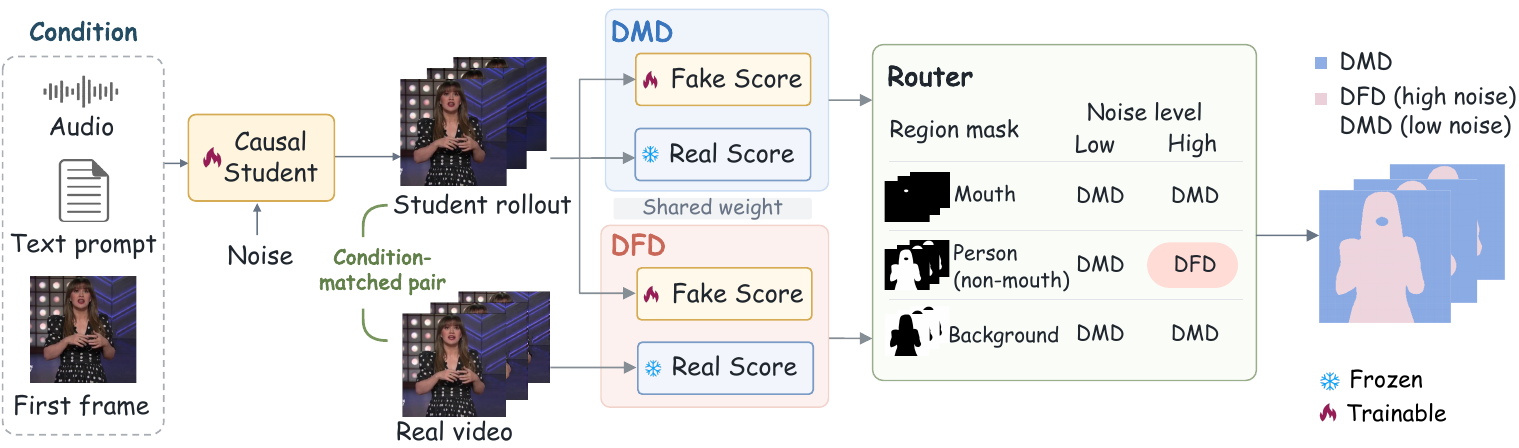}
\vspace{-1.7em}
\caption{\textbf{Routed Forcing routes distillation by semantic region and noise stage.} The causal student generates streaming video from a first frame, audio, and text prompt. DMD queries both the fake score and real score on the student rollout video, while DFD queries the real score on real video with the same condition to inject diverse and dynamic motion patterns from real data. The router selects DFD only for the non-mouth person region at high noise stages and DMD elsewhere.}
\label{fig:rf-method}\end{figure}

\subsection{Preliminaries}
\label{sec:background}

\textbf{Self Forcing.}
Given a reference image, audio, and text prompt as condition $c$, the teacher is a bidirectional diffusion transformer that generates high-quality video via multi-step denoising but cannot meet the demands of real-time inference.
Self Forcing~\citep{huang2025selfforcing} distills it into a causal student that generates frames or chunks autoregressively, enabling streaming generation.
During training, the student performs forward inference on its own rollout and aligns its output distribution with the teacher via DMD.

\textbf{Distribution Matching Distillation (DMD).}
DMD~\citep{yin2024dmd} trains the student by minimizing the reverse KL divergence between the student-induced distribution $p_\theta$ and the teacher data distribution $p_{\text{data}}$ across noise levels:
\begin{equation}
\mathcal{L}_\text{DMD} = \mathbb{E}_t \left[ D_\text{KL}( p_{\theta,t} \| p_{\text{data},t} ) \right].
\label{eq:dmd-loss}
\end{equation}
The gradient takes the form of the difference between a fake score and a teacher score, both evaluated on the student's own samples:
\begin{equation}
g_{\text{DMD}}(t) = w_t \left[ \hat{x}_{\text{fake}}(z_t^s, c, t) - \hat{x}_T(z_t^s, c, t) \right],
\label{eq:dmd}
\end{equation}
where $z_t^s = \alpha_t x^s + \sigma_t \epsilon$ is the noised student rollout, $x^s$ is the clean latent generated by the student, $\hat{x}_T$ is the frozen teacher denoiser, $\hat{x}_{\text{fake}}$ is a learned fake-score model tracking the evolving student distribution, and $w_t$ is a noise-level-dependent weighting coefficient.
This gradient is mode-seeking: it penalizes the student for generating outside the teacher's support but provides no signal for teacher modes the student has not visited.

\textbf{Data-Forcing Distillation (DFD).}
To mitigate the mode collapse of DMD, DFD~\citep{chen2026dfd} replaces the teacher's input with a noised real video $z_t^r = \alpha_t x^r + \sigma_t \epsilon$, where $x^r$ is condition-matched real video sharing the same noise $\epsilon$ as the student:
\begin{equation}
g_{\text{DFD}}(t) = w_t \left[ \hat{x}_{\text{fake}}(z_t^s, c, t) - \hat{x}_T(z_t^r, c, t) \right].
\label{eq:dfd}
\end{equation}
Because $x^r$ is drawn from the full data distribution, teacher modes that the student has never visited still contribute supervision.
In practice, DFD starts from a converged DMD checkpoint and switches between DMD and DFD gradients with 50\% probability at each training step.

\subsection{Observation: Region-Heterogeneous Collapse}
\label{sec:observation}

The mode-seeking nature of DMD leads to diversity loss after distillation, and we find that this loss is not uniform across all regions.
We define three semantic regions: mouth, non-mouth person, and background. We quantify diversity for each semantic region by measuring the mean pairwise LPIPS across different random seeds. Results measured with $L_1$ are provided in Figure~\ref{fig:l1_diversity} in the appendix.

As shown in Figure~\ref{fig:teaser}a, the teacher produces diverse gesture variations under the same condition with different seeds, such as raising and waving hands, while the student's two seeds yield nearly identical and static trajectories, with hands showing almost no change and remaining consistent with the first frame. Figure~\ref{fig:collapse_curve} in the appendix further confirms this trend: diversity decreases during DMD training.
The quantitative analysis in Figure~\ref{fig:teaser}b further reveals the region-heterogeneous nature of this collapse: the non-mouth person region suffers the largest diversity drop of 37.9\%, far exceeding the mouth (10.4\%), while the background remains nearly stable (1.4\%).

We term this phenomenon \emph{region-heterogeneous collapse}: different semantic regions suffer different degrees of diversity loss under DMD distillation.
The non-mouth person region has many plausible modes for the same speech input, including varied head motions, expressions, and gestures, and suffers the largest diversity loss.
The mouth region is strongly constrained by audio, with small diversity loss.
The background is nearly deterministic and remains stable.
This regionally heterogeneous structure motivates subsequent region-level differentiation in the distillation objective.

\subsection{Where to Force: Semantic-Region Routing}
\label{sec:region-routing}

Based on the observation in \S\ref{sec:observation}, Routed Forcing assigns different distillation objectives to different semantic regions according to their collapse severity.
Routed Forcing uses masks to assign each spatial location to three semantic regions: mouth~($M_m$), non-mouth person~($M_p$), and background~($M_b$), where $M_m + M_p + M_b = 1$.
The three regions have distinct distillation needs. The non-mouth person region (head motion, expression, gestures) suffers the most severe diversity collapse and is where the real-data supervision of DFD is most needed.
The mouth region is strongly constrained by audio, with limited diversity collapse, retaining DMD to preserve lip accuracy.
The background region is nearly deterministic, with minimal diversity collapse, and similarly uses DMD to maintain scene stability.

Routed Forcing therefore introduces DFD only in the non-mouth person region.
Following the original DFD training strategy, this region switches between $g_{\text{DMD}}(t)$ and $g_{\text{DFD}}(t)$ with 50\% probability at each step.
The expected gradient after region routing is:
\begin{equation}
g = M_p \cdot \tfrac{1}{2}\bigl[g_{\text{DMD}}(t) + g_{\text{DFD}}(t)\bigr] + (1 - M_p) \cdot g_{\text{DMD}}(t).
\label{eq:region-routing}
\end{equation}

However, region routing alone is not sufficient: even when restricted to the non-mouth person region, applying DFD across all noise stages still introduces artifacts at low noise stages.
This motivates further routing along the noise-stage dimension.

\subsection{When to Force: Noise-Stage Routing}
\label{sec:noise-routing}

DFD replaces the real score's input $z_t^s$ with the noised real video $z_t^r$, where $z_t^s = \alpha_t x^s + \sigma_t \epsilon$, $z_t^r = \alpha_t x^r + \sigma_t \epsilon$, $x^s$ is the clean latent generated by the student, $x^r$ is the condition-matched real video latent, $\sigma_t$ is the noise coefficient, and both share the same noise $\epsilon$.
Since the shared noise cancels, the noise-normalized distance between $z_t^s$ and $z_t^r$ is:
\begin{equation}
\frac{\|z_t^s - z_t^r\|}{\sigma_t} = \sqrt{\text{SNR}_t} \cdot \|x^s - x^r\|,
\label{eq:snr-distance}
\end{equation}
where $\text{SNR}_t = \alpha_t^2 / \sigma_t^2$ is the signal-to-noise ratio at timestep $t$.

At high noise stages ($\text{SNR}_t$ small), real video and student rollout video are close in noise space, with small cross-trajectory interference.
Denoising at this stage determines coarse-grained semantic trajectories, and introducing real video here injects the rich and dynamic motion patterns into the student, improving the dynamics and diversity.
At low noise stages ($\text{SNR}_t$ large), the DFD update approaches a direct fit to the paired real trajectory, and cross-trajectory differences (hand positions, head angles, body poses) can blur structural details.
Denoising at this stage controls fine details (texture, hand details, facial identity consistency) and should be guided by DMD on the student's own generation trajectory.
We provide a detailed mathematical analysis of why DFD should be restricted to high noise stages in Appendix~\ref{app:noise}.

Based on the above analysis, instead of the 50\% random mixing used in original DFD (Eq.~\ref{eq:region-routing}), we propose Noise-Routed DFD (NR-DFD), which deterministically restricts DFD to high noise stages based on a noise threshold $\tau$:
\begin{equation}
g_{\text{NR-DFD}}(t) = \begin{cases} g_{\text{DFD}}(t) & t \geq \tau \quad (\text{high noise}) \\ g_{\text{DMD}}(t) & t < \tau \quad (\text{low noise}). \end{cases}
\label{eq:nr-dfd}
\end{equation}

Combining region routing and noise routing, the full Routed Forcing gradient is:
\begin{equation}
g_{\text{RF}} = M_p \cdot g_{\text{NR-DFD}}(t) + (1 - M_p) \cdot g_{\text{DMD}}(t).
\label{eq:rf-total}
\end{equation}

By combining semantic-region routing and noise-stage routing, Routed Forcing precisely applies DFD's real-data supervision where and when it is needed, improving diversity and dynamics while avoiding the degradation in lip accuracy and visual quality caused by global DFD.

\section{Experiments}

\subsection{Experiment Setup}
\label{sec:experiment-setup}

\textbf{Implementation details.}
We implement Routed Forcing based on OmniAvatar~\citep{gan2025omniavatar}. The student is a causal OmniAvatar-1.3B initialized from LiveTalk~\citep{chen2025livetalk} weights. The frozen teacher (real score) is a bidirectional OmniAvatar-14B, and the fake-score model is a bidirectional OmniAvatar-1.3B. We sample 10,000 videos from SpeakerVid~\citep{zhang2025speakervid} as the training set. Following DFD~\citep{chen2026dfd}, we first train 1,000 steps of Self Forcing with DMD to provide a high-quality starting point. Routed Forcing training is then applied for 200 steps. At inference, we generate $512\times512$ videos of 81 frames at 16\,fps, corresponding to 21 latent frames with a causal block size of 3 and four sampling steps per block. Routed Forcing employs Top-50\% noise routing, activating DFD only at the top 50\% noise timesteps for the non-mouth person region. Person masks are extracted from real videos using DeepLabV3~\citep{chen2017deeplabv3}, and mouth masks are obtained from student rollouts using InsightFace~\citep{deng2020retinaface}. We discuss the mask design choice in Appendix~\ref{app:mask}. More implementation details are provided in Appendix~\ref{app:hyperparams}.

\textbf{Baselines.}
We set up two groups of comparisons. (1)~\textit{Controlled training strategy comparison}: Self Forcing~\citep{huang2025selfforcing} (DMD only), Data-Forcing Distillation (DFD)~\citep{chen2026dfd} (50\% probability DMD, 50\% probability DFD at each step), DP-DMD~\citep{lu2025dpdmd}, DynaForcing~\citep{liu2025dynaforcing}, and Reward Forcing~\citep{lu2025rewardforcing} (using the reward model from DynaForcing). All methods share the same initialization checkpoint, training data and random seeds, differing only in the algorithm design. (2)~\textit{Public model comparison}: We also compare with audio-driven avatar models of comparable scale: LiveTalk~\citep{chen2025livetalk}, FantasyTalking~\citep{wang2025fantasytalking}, AniPortrait~\citep{wei2024aniportrait}, and OmniAvatar~\citep{gan2025omniavatar}.

\textbf{Evaluation metrics.}
We randomly sample 100 videos from SpeakerVid~\citep{zhang2025speakervid}, with no overlapping identities with the training set, and 100 from AVSpeech~\citep{ephrat2018avspeech} as test sets. We report ten metrics in four groups. (1)~\textit{Video quality}: FID~\citep{heusel2017gans} and FVD~\citep{unterthiner2018fvd}. (2)~\textit{Lip sync}: Sync-C and Sync-D~\citep{chung2016syncnet}. (3)~\textit{Dynamics}: RAFT-Motion~\citep{teed2020raft} (pixel-level motion intensity) and DINO-Temporal~\citep{oquab2024dinov2} (semantic-level temporal variation). (4)~\textit{Diversity}: frame-wise mean pairwise distance across four seeds using CLIP~\citep{radford2021clip}, DINOv2, LPIPS~\citep{zhang2018lpips}, and L1. For comparison with audio-driven avatar models, we additionally report FPS and first-frame latency.

\subsection{Main Results}

\begin{table}[!t]
\centering
\caption{\textbf{Controlled comparison of training strategies.} All methods share the same initialization checkpoint, training data and random seeds, differing only in the algorithm design.}
\label{tab:forcing-comparison}
\vspace{6pt}
\setlength{\tabcolsep}{4pt}
\renewcommand{\arraystretch}{1.25}
\resizebox{\textwidth}{!}{%
\begin{tabular}{lcccccccccc}
\toprule[0.9pt]
\multirow{2}{*}{\raisebox{-0.5ex}{\textbf{Method}}} & \multicolumn{2}{c}{\textbf{Video Quality}} & \multicolumn{2}{c}{\textbf{Lip Sync}} & \multicolumn{2}{c}{\textbf{Dynamics} ($\times100$)} & \multicolumn{4}{c}{\textbf{Diversity} ($\times100$)} \\
\cmidrule(lr){2-3}\cmidrule(lr){4-5}\cmidrule(lr){6-7}\cmidrule(l){8-11}
& FID $\downarrow$ & FVD $\downarrow$ & Sync-C $\uparrow$ & Sync-D $\downarrow$ & RAFT $\uparrow$ & DINO-T $\uparrow$ & CLIP $\uparrow$ & DINO $\uparrow$ & LPIPS $\uparrow$ & L1 $\uparrow$ \\
\midrule
\multicolumn{11}{c}{\textbf{\textit{SpeakerVid}}} \\
\midrule
Self Forcing & 27.17 & 320.91 & 4.03 & 10.97 & 5.62 & 1.07 & 3.07 & 1.37 & 4.96 & 2.14 \\
DFD & 28.78 & 325.33 & 3.92 & 11.10 & 5.47 & 1.11 & 3.08 & 1.42 & 4.67 & 1.84 \\
DP-DMD & 27.22 & 320.27 & 4.00 & 10.98 & 5.67 & 1.06 & 3.00 & 1.33 & 4.89 & 2.13 \\
Reward Forcing & 26.99 & 315.84 & 4.04 & 10.97 & 5.79 & 1.09 & 3.14 & 1.41 & 5.20 & 2.21 \\
DynaForcing & 27.75 & 326.38 & 4.04 & 10.95 & 5.45 & 1.07 & 2.96 & 1.31 & 4.69 & 2.06 \\
\rowcolor{oursbg}
Routed Forcing & \textbf{25.43} & \textbf{309.92} & \textbf{4.07} & \textbf{10.94} & \textbf{7.80} & \textbf{1.25} & \textbf{3.55} & \textbf{1.58} & \textbf{6.09} & \textbf{2.29} \\
\midrule
\multicolumn{11}{c}{\textbf{\textit{AVSpeech}}} \\
\midrule
Self Forcing & 39.44 & 456.01 & 3.05 & 10.77 & 3.33 & 0.99 & 2.42 & 1.73 & 4.08 & 1.90 \\
DFD & 41.52 & 442.15 & 2.96 & 10.80 & 3.46 & 1.12 & 2.59 & 1.89 & 3.99 & 1.77 \\
DP-DMD & 39.48 & 454.76 & 3.02 & 10.79 & 3.33 & 0.98 & 2.35 & 1.68 & 3.97 & 1.87 \\
Reward Forcing & 39.46 & 453.16 & 3.04 & 10.80 & 3.35 & 1.01 & 2.46 & 1.78 & 4.17 & 1.94 \\
DynaForcing & 39.78 & 460.86 & 3.04 & 10.77 & 3.23 & 0.96 & 2.29 & 1.63 & 3.85 & 1.82 \\
\rowcolor{oursbg}
Routed Forcing & \textbf{38.47} & \textbf{424.58} & \textbf{3.09} & \textbf{10.75} & \textbf{4.85} & \textbf{1.23} & \textbf{2.92} & \textbf{2.01} & \textbf{5.08} & \textbf{2.14} \\
\bottomrule[0.9pt]
\end{tabular}}
\vspace{5pt}
\end{table}

\begin{figure}[t]
  \centering
  \includegraphics[width=\linewidth]{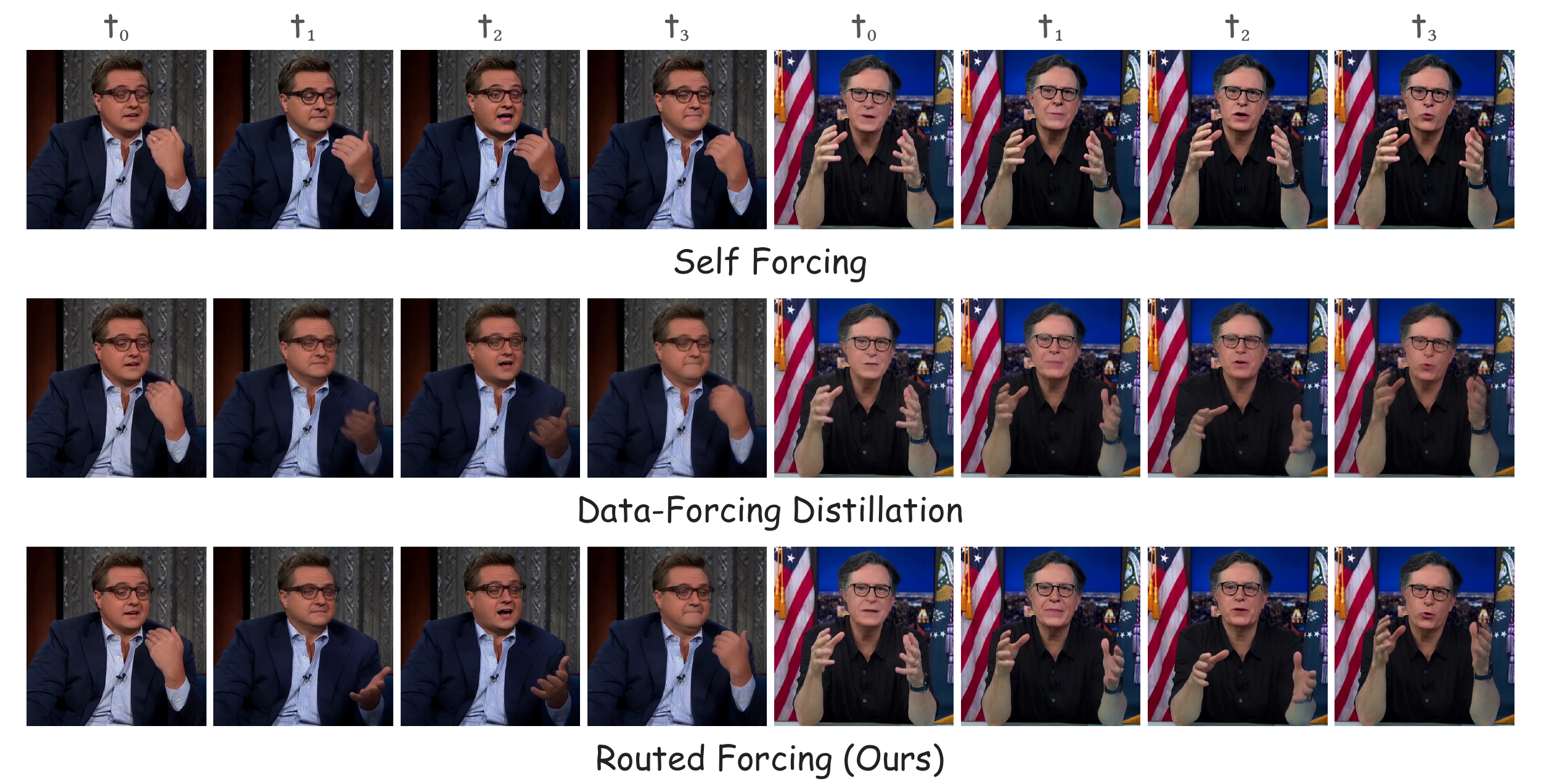}
  \vspace{-1.7em}
  \caption{\textbf{Qualitative comparison of different training strategies.} All methods are conditioned on the same first frame and audio, and each column shows frames at the same timestep. Routed Forcing produces richer dynamics (e.g., diverse gestures) than Self Forcing and better video quality (e.g., clearer hands) than DFD. More examples are provided in Figure~\ref{fig:qualitative-appendix} in the appendix.}
  \label{fig:qualitative-gestures}
\end{figure}

\textbf{Controlled training strategy comparison.}
As shown in Table~\ref{tab:forcing-comparison}, Routed Forcing achieves the best performance across all ten metrics on both datasets. Compared to Self Forcing, Routed Forcing reduces FID by 6.4\%/2.5\% and FVD by 3.4\%/6.9\% on SpeakerVid/AVSpeech. The improvements in dynamics are most pronounced: RAFT-Motion increases by 38.8\%/45.6\% and DINO-Temporal by 16.8\%/24.2\%. All four diversity metrics consistently improve by 7--25\% on both datasets. Lip sync simultaneously remains the best, indicating that the gains in dynamics and diversity do not come at the cost of lip accuracy.

Moreover, DFD fails to consistently improve video quality (FID is worse than Self Forcing on both datasets) and notably degrades lip sync (Sync-C and Sync-D are both worse than Self Forcing). Globally mixing real-trajectory supervision without distinction hurts both lip accuracy and visual quality, whereas Routed Forcing avoids these side effects through semantic and noise-stage routing. DP-DMD applies teacher-target supervision only at the first generation step, but this per-step routing yields negligible improvement compared to noise-level and region routing. Reward Forcing and DynaForcing both remain within the mode-seeking DMD framework: Reward Forcing yields only marginal gains through gradient reweighting, and DynaForcing adds data-anchored rollout with reward weighting but still queries scores on student rollouts.

\textbf{Qualitative comparison of training strategies.}
As shown in Figure~\ref{fig:qualitative-gestures}, Self Forcing produces near-static avatars with highly consistent gestures, a consequence of DMD mode-seeking that collapses the distribution onto simple static modes. DFD improves dynamics by introducing real-video supervision, but applying DFD at low noise stages causes layout mismatch between student-generated and real trajectories, resulting in noticeable hand blur. Routed Forcing restricts DFD to the non-mouth person region at high noise stages via semantic and noise routing, producing richer dynamics (e.g., diverse gestures) than Self Forcing and better video quality (e.g., clearer hands) than DFD. More examples are provided in Figure~\ref{fig:qualitative-appendix} in the appendix.

\begin{table}[!t]
\centering
\caption{\textbf{Comparison with audio-driven avatar models of comparable scale.}}
\label{tab:public-weight-comparison}
\vspace{6pt}
\setlength{\tabcolsep}{4pt}
\renewcommand{\arraystretch}{1.25}
\resizebox{\textwidth}{!}{%
\begin{tabular}{lccccccccc}
\toprule[0.9pt]
\multirow{2}{*}{\raisebox{-0.5ex}{\textbf{Method}}} & \multirow{2}{*}{\textbf{Size}} & \multicolumn{2}{c}{\textbf{Efficiency}} & \multicolumn{2}{c}{\textbf{Video Quality}} & \multicolumn{2}{c}{\textbf{Lip Sync}} & \multicolumn{2}{c}{\textbf{Dynamics} ($\times100$)} \\
\cmidrule(lr){3-4}\cmidrule(lr){5-6}\cmidrule(lr){7-8}\cmidrule(l){9-10}
& & FPS $\uparrow$ & Latency (s) $\downarrow$ & FID $\downarrow$ & FVD $\downarrow$ & Sync-C $\uparrow$ & Sync-D $\downarrow$ & RAFT $\uparrow$ & DINO-T $\uparrow$ \\
\midrule
\multicolumn{10}{c}{\textbf{\textit{SpeakerVid}}} \\
\midrule
LiveTalk & 1.3B & \textbf{24.82} & \textbf{0.33} & 28.77 & 357.28 & 3.10 & 11.62 & 4.77 & 1.07 \\
FantasyTalking & 14B & 0.35 & 232.06 & 33.98 & 364.56 & 2.44 & 12.11 & 3.99 & 0.57 \\
AniPortrait & 2.5B & 0.38 & 211.77 & 33.49 & 570.41 & 3.22 & 11.24 & 4.88 & 1.18 \\
OmniAvatar & 1.3B & 0.97 & 83.44 & 62.83 & 902.13 & 1.46 & 13.74 & 2.27 & \textbf{1.70} \\
\rowcolor{oursbg}
Routed Forcing & 1.3B & \textbf{24.82} & \textbf{0.33} & \textbf{25.43} & \textbf{309.92} & \textbf{4.07} & \textbf{10.94} & \textbf{7.80} & 1.25 \\
\midrule
\multicolumn{10}{c}{\textbf{\textit{AVSpeech}}} \\
\midrule
LiveTalk & 1.3B & \textbf{24.82} & \textbf{0.33} & 40.00 & 491.10 & 2.05 & 11.79 & 2.77 & 1.01 \\
FantasyTalking & 14B & 0.35 & 232.06 & 43.80 & 461.93 & 1.74 & 11.78 & 3.30 & 0.66 \\
AniPortrait & 2.5B & 0.38 & 211.77 & 44.76 & 726.12 & 1.29 & 11.88 & 4.12 & \textbf{2.03} \\
OmniAvatar & 1.3B & 0.97 & 83.44 & 76.70 & 1124.44 & 0.99 & 13.13 & 1.76 & 2.01 \\
\rowcolor{oursbg}
Routed Forcing & 1.3B & \textbf{24.82} & \textbf{0.33} & \textbf{38.47} & \textbf{424.58} & \textbf{3.09} & \textbf{10.75} & \textbf{4.85} & 1.23 \\
\bottomrule[0.9pt]
\end{tabular}}
\vspace{5pt}
\end{table}

\textbf{Comparison with avatar models of comparable scale.}
As shown in Table~\ref{tab:public-weight-comparison}, Routed Forcing shares the same 1.3B causal architecture as LiveTalk, running at 24.82\,FPS (0.33\,s latency) while improving RAFT-Motion by 63\% and FID by 12\% on SpeakerVid, showing that training strategy alone yields substantial gains without architectural change. In terms of video quality and lip sync, Routed Forcing notably outperforms all baselines on both datasets. For dynamics, Routed Forcing leads by a large margin in RAFT-Motion (SpeakerVid 7.80 vs.\ LiveTalk 4.77, FantasyTalking 3.99). OmniAvatar and AniPortrait obtain the highest DINO-Temporal on SpeakerVid (1.70) and AVSpeech (2.03) respectively, but both have lower video quality and lip sync than Routed Forcing.

\subsection{Ablation Study}
\label{sec:ablation}

\textbf{Key component ablation.}
Table~\ref{tab:rf-ablation} isolates the contributions of region routing and noise routing on SpeakerVid. All four conditions branch from the same SF-1000 checkpoint and are trained for 200 steps. \textit{Region routing} increases RAFT-Motion from 5.62 to 6.27 (+11.6\%), while reducing FID from 27.17 to 26.97 and leaving lip sync unchanged, demonstrating that region routing effectively protects mouth accuracy. \textit{Noise routing} yields larger dynamics gains, RAFT-Motion rises from 5.62 to 7.50 (+33.5\%) and diversity improves substantially, but Sync-D increases from 10.97 to 11.02, indicating lip-sync degradation when DFD reaches the mouth.

Full Routed Forcing achieves the best FID (25.43) and RAFT-Motion (7.80) among all conditions, while Sync-D not only avoids degradation but reaches the best value. This synergy shows that region routing confines DFD to the non-mouth person region, eliminating the lip-sync side effects of DFD, while noise routing confines DFD to high noise stages, avoiding interference with low-noise details. Combining region routing and noise routing yields improvements across all four metric groups.

\begin{table}[!t]
\centering
\caption{\textbf{Key component ablation.} We isolate the contributions of region routing and noise routing (Top-50\%). Region routing applies DFD only to the non-mouth person region. Noise routing activates DFD only at high noise stages.}
\label{tab:rf-ablation}
\vspace{6pt}
\setlength{\tabcolsep}{4pt}
\renewcommand{\arraystretch}{1.25}
\resizebox{\textwidth}{!}{%
\begin{tabular}{ccccccccccccc}
\toprule[0.9pt]
\multirow{2}{*}{\textbf{Self Forcing}} & \multirow{2}{*}{\textbf{Region}} & \multirow{2}{*}{\textbf{Noise}} & \multicolumn{2}{c}{\textbf{Video Quality}} & \multicolumn{2}{c}{\textbf{Lip Sync}} & \multicolumn{2}{c}{\textbf{Dynamics} ($\times100$)} & \multicolumn{4}{c}{\textbf{Diversity} ($\times100$)} \\
\cmidrule(lr){4-5}\cmidrule(lr){6-7}\cmidrule(lr){8-9}\cmidrule(l){10-13}
& & & FID $\downarrow$ & FVD $\downarrow$ & Sync-C $\uparrow$ & Sync-D $\downarrow$ & RAFT $\uparrow$ & DINO-T $\uparrow$ & CLIP $\uparrow$ & DINO $\uparrow$ & LPIPS $\uparrow$ & L1 $\uparrow$ \\

\midrule
\ding{51} &  &  & 27.17 & 320.91 & 4.03 & 10.97 & 5.62 & 1.07 & 3.07 & 1.37 & 4.96 & 2.14 \\
\ding{51} & \ding{51} &  & 26.97 & 316.81 & 4.04 & 10.96 & 6.27 & 1.22 & 3.22 & 1.51 & 5.16 & 2.00 \\
\ding{51} &  & \ding{51} & 26.61 & 311.85 & 4.01 & 11.02 & 7.50 & 1.22 & \textbf{3.56} & \textbf{1.60} & 5.92 & 2.22 \\
\rowcolor{oursbg}
\ding{51} & \ding{51} & \ding{51} & \textbf{25.43} & \textbf{309.92} & \textbf{4.07} & \textbf{10.94} & \textbf{7.80} & \textbf{1.25} & 3.55 & 1.58 & \textbf{6.09} & \textbf{2.29} \\
\bottomrule[0.9pt]
\end{tabular}}
\vspace{5pt}
\end{table}

\begin{figure}[t]
  \centering
  \includegraphics[width=\linewidth]{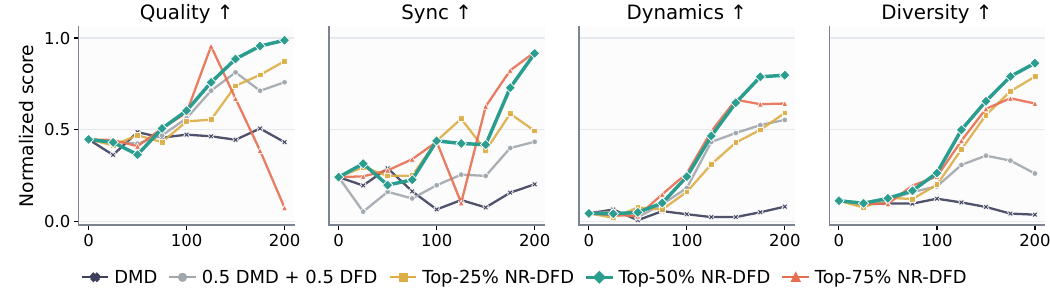}
  \vspace{-1.7em}
  \caption{\textbf{Noise threshold ablation: training curves on SpeakerVid.} Region routing is kept active throughout: all training variants are applied only to the non-mouth person region, with DMD retained for the mouth and background. The y-axis shows the mean normalized score across metrics.}
  \label{fig:rf-threshold-training-curves}
\end{figure}

\textbf{Noise threshold sensitivity.}
Figure~\ref{fig:rf-threshold-training-curves} shows training curves under different noise thresholds. Region routing is kept active throughout: all training variants are applied only to the non-mouth person region, with DMD retained for the mouth and background. Top-$k$\% activates DFD at the highest $k$\% noise stages, and the ``0.5 DMD + 0.5 DFD'' baseline applies 50\% probability DFD mixing across all noise stages, following the original DFD training strategy. The DMD baseline uses no DFD. DMD-only remains stable across all metrics but yields no significant gains. The ``0.5 DMD + 0.5 DFD'' baseline improves over DMD-only but underperforms noise-routed DFD, confirming that restricting DFD to high noise stages is more effective than uniform mixing. Top-75\% exhibits significant quality collapse in later training stages, confirming that DFD at low noise stages harms quality due to layout mismatch between student-generated and real trajectories (consistent with the analysis in \S\ref{sec:noise-routing}). Top-25\% yields improvements between Top-50\% and Self Forcing, as the limited DFD proportion restricts its effect. Top-50\% strikes the best balance across all four metric groups, outperforming both Self Forcing and the original DFD method.

\subsection{Limitations}
\label{sec:limitations}

Semantic-region routing relies on off-the-shelf segmentation models (DeepLabV3 and InsightFace) for mask extraction, so mask accuracy is bounded by the performance of these models.
Routed Forcing is validated on streaming avatar generation. The routing principle may generalize to other video generation tasks such as pose-guided human animation, and this remains to be explored.

\section{Conclusion}

We presented Routed Forcing, which routes DFD by semantic region and noise stage to improve the dynamics and diversity while preserving video quality for streaming avatar generation.
Semantic-region routing confines DFD to the non-mouth person region where region-heterogeneous collapse is most severe, retaining DMD elsewhere. Noise-stage routing restricts DFD to high noise stages where cross-trajectory interference is small, with DMD refining details at low noise stages.
In the future, we plan to extend Routed Forcing to a broader range of applications beyond streaming avatar generation, such as pose-guided human animation.

\bibliography{iclr2027_conference}
\bibliographystyle{iclr2027_conference}

\newpage
\appendix
\section{Why the Noise Level Matters for DFD: An Idealized Analysis}
\label{app:theory}
\label{app:noise}

This section analyzes, under idealized assumptions, how the DFD update changes with the noise level at which the scores are queried.
The analysis motivates noise-stage routing (\S\ref{sec:noise-routing}).
Proposition~\ref{prop:selectivity} describes what the real trajectory adds to the DMD update at high noise, and Proposition~\ref{prop:regression} shows that the same supervision approaches a regression onto the paired real video at low noise.
Propositions~\ref{prop:dfd-modes} and~\ref{prop:dmd-modes} compare DFD and DMD when the student has collapsed onto a static mode of the teacher.

\textbf{Setup.}
We fix the conditioning signal $c$ and assume that the teacher distribution equals the real data distribution.
The teacher and the fake score are modeled as exact conditional-mean denoisers $m_{T,t}$ and $m_{F,t}$ for the teacher distribution and the current student distribution, respectively; each maps a noisy observation to the conditional expectation of the clean latent.
For analytical tractability, the teacher is modeled without classifier-free guidance.
All distributions have finite second moments, and we write $\mu_c := \mathbb{E}_T[X \mid c]$ and $\Sigma_c := \operatorname{Cov}_T(X \mid c)$ for the conditional mean and covariance of the teacher.
The student output is $x^s = G_\theta(\xi, c)$ for a random seed $\xi$, and, given $c$, the seed $\xi$, the real video $x^r$, and the shared standard Gaussian noise $\epsilon$ are mutually independent.
Expectations over $x^r$ treat it as a draw from the conditional data distribution; since the training set contains one real video per condition, these expectations describe a population-level model.
We analyze the update after dividing out the time weight $w_t > 0$ and before region masking.

\textbf{Notation.}
Following the shared-noise coupling in \S\ref{sec:noise-routing} (Eq.~\ref{eq:snr-distance}), we write $\delta = x^s - x^r$ and $r_t := \mathrm{SNR}_t = \alpha_t^2 / \sigma_t^2$ with $\alpha_t, \sigma_t > 0$, and define the unweighted updates $u_{\mathrm{DFD}} = g_{\mathrm{DFD}} / w_t$ and $u_{\mathrm{DMD}} = g_{\mathrm{DMD}} / w_t$.
Here $t$ is the level at which the clean student rollout is re-noised for the score queries, which is distinct from the sampling steps of the student.
Since these updates are gradients, the student moves against them; a term of the form $K(x^s - y)$ with $K \succeq 0$ therefore pulls $x^s$ toward $y$.

\subsection{Covariance-Weighted Correction at High Noise}
\label{app:theory-cov}

DFD differs from DMD only in the input of the teacher query, so the two updates share the fake-score term.

\begin{proposition}[Covariance-weighted correction]
\label{prop:selectivity}
Under the setup above, the additional term that DFD introduces over DMD,
\begin{equation}
b_t := u_{\mathrm{DFD}} - u_{\mathrm{DMD}} = m_{T,t}(z_t^s) - m_{T,t}(z_t^r),
\label{eq:bt-def}
\end{equation}
satisfies
\begin{equation}
b_t = A_t \delta, \qquad A_t := r_t \int_0^1 C_t\bigl(z_t^r + s\,\alpha_t \delta\bigr)\, ds \succeq 0,
\label{eq:At-def}
\end{equation}
where $C_t(z) := \operatorname{Cov}_T(X \mid Z_t = z, c)$ is the posterior covariance of the teacher.
For a fixed pair $(x^s, x^r)$ and noise $\epsilon$, $A_t = r_t \Sigma_c + o(r_t)$ as $r_t \to 0$.
\end{proposition}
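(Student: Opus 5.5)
The plan rests on Tweedie's second-order identity: the Jacobian of the exact posterior-mean denoiser is a scaled posterior covariance. The fake-score term is identical in both updates, so
\[
b_t = m_{T,t}(z_t^s) - m_{T,t}(z_t^r)
\]
follows directly from Eqs.~\ref{eq:dmd} and~\ref{eq:dfd} after dividing by $w_t$.

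\textbf{Step 1: the Jacobian of the denoiser.} Model $Z_t = \alpha_t X + \sigma_t \epsilon$ with $X \sim p_T(\cdot\mid c)$. The posterior density is proportional to
\[
p_T(x\mid c)\exp\!\bigl(-\|z-\alpha_t x\|^2/(2\sigma_t^2)\bigr),
\]
which is an exponential family in $x$ with natural parameter $\alpha_t z/\sigma_t^2$. Differentiating $m_{T,t}(z) = \mathbb{E}[X\mid Z_t=z,c]$ under the integral sign gives
\[
\nabla_z m_{T,t}(z) = \frac{\alpha_t}{\sigma_t^2}\, C_t(z).
\]
Finite second moments together with the Gaussian likelihood justify the interchange: the posterior partition function is finite and smooth in $z$, and dominated convergence applies locally in $z$. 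This also shows that $m_{T,t}$ is $C^1$ and that $C_t$ is continuous.

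\textbf{Step 2: integrate along the segment.} Since the noise is shared, $z_t^s - z_t^r = \alpha_t \delta$. The fundamental theorem of calculus along $z_t^r + s\alpha_t\delta$, $s\in[0,1]$, gives
\[
b_t = \int_0^1 \frac{\alpha_t}{\sigma_t^2} C_t(z_t^r+s\alpha_t\delta)\,\alpha_t\delta\, ds = A_t\delta,
\]
with $A_t = r_t\int_0^1 C_t(\cdot)\,ds$. Each $C_t(z)$ is a covariance matrix and hence PSD, so the integral is PSD as well, giving $A_t \succeq 0$.

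\textbf{Step 3: asymptotics as $r_t\to0$.} Here $(x^s,x^r,\epsilon)$ is fixed. We need $\int_0^1 C_t(z_t^r+s\alpha_t\delta)\,ds \to \Sigma_c$. Rewrite the posterior in terms of $\tilde z = z/\sigma_t$. With $z = \alpha_t x' + \sigma_t\epsilon$ for a path point $x' = x^r + s\delta$, the posterior weight becomes
\[
\exp\!\bigl(-\tfrac12\|\epsilon + \sqrt{r_t}(x'-x)\|^2\bigr) \propto \exp\!\bigl(-\sqrt{r_t}\,\epsilon^\top x - \tfrac{r_t}{2}\|x\|^2 + r_t\, x'^\top x\bigr).
\]
As $r_t\to0$ this weight tends to $1$ pointwise, uniformly in $s\in[0,1]$ because $x'$ ranges over a compact segment. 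The posterior therefore converges to the prior $p_T(\cdot\mid c)$.

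\textbf{Main obstacle: convergence of second moments.} The delicate point is upgrading the convergence of the posterior to convergence of its covariance, which requires uniform integrability of $\|x\|^2$ under the tilted measures. With only finite second moments, the factor $e^{-\sqrt{r_t}\epsilon^\top x}$ could inflate the tails. My first approach is to bound $e^{-\sqrt{r}\epsilon^\top x - r\|x\|^2/2 + r x'^\top x}$ by completing the square:
\[
-\tfrac r2\|x\|^2 + \sqrt r\, x^\top(\sqrt r\, x'-\epsilon) \le \tfrac12\|\sqrt r\, x' - \epsilon\|^2.
\]
The right-hand side is bounded uniformly in $x$ and in small $r$. This yields a dominating constant, so dominated convergence applies to the numerator integrals of $1$, $x$, and $xx^\top$, and the normalizer tends to $1$. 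Hence $C_t \to \Sigma_c$ uniformly along the segment, so $\int_0^1 C_t\,ds = \Sigma_c + o(1)$. Multiplying by $r_t$ gives $A_t = r_t\Sigma_c + o(r_t)$.
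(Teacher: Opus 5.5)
Your proposal is correct and follows the paper's route. Both use the conditional-covariance (second-order Tweedie) identity $\nabla_z m_{T,t} = (\alpha_t/\sigma_t^2)C_t$, integrate it along the segment $z_t^r + s\alpha_t\delta$ with $A_t \succeq 0$ inherited from each $C_t$, and apply dominated convergence under finite second moments to send the posterior covariance to $\Sigma_c$ as $r_t \to 0$. Your completing-the-square bound supplies a dominating constant that the paper leaves implicit. One sign slip: the linear term in the tilt should be $+\sqrt{r_t}\,\epsilon^\top x$, so the bound reads $\tfrac12\|\sqrt{r_t}\,x' + \epsilon\|^2$, which does not affect the argument.
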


\begin{proof}
Under the Gaussian observation model $Z_t = \alpha_t X + \sigma_t \epsilon$, the Jacobian of the conditional mean satisfies the conditional covariance identity $\nabla_z m_{T,t}(z) = (\alpha_t / \sigma_t^2)\, C_t(z)$~\citep{dytso2023conditional}.
Since $z_t^s - z_t^r = \alpha_t \delta$, integrating this identity along the segment between the two points gives Eq.~\ref{eq:At-def}, and $A_t \succeq 0$ because every conditional covariance is positive semidefinite.
As $r_t \to 0$ with $x^s$, $x^r$, and $\epsilon$ fixed, the posterior of $X$ given $Z_t$ converges to the conditional prior, and dominated convergence with finite second moments gives $\int_0^1 C_t\bigl(z_t^r + s\,\alpha_t \delta\bigr)\, ds \to \Sigma_c$.
\end{proof}

If the posterior covariance is bounded along the segment, $\|C_t(z)\|_{\mathrm{op}} \le K_c$, then $\|b_t\| \le K_c\, r_t \|\delta\|$, so the correction vanishes as $r_t \to 0$.
Because $A_t$ is a matrix, the correction also depends on direction.
In the high-noise limit, its leading-order action is governed by the eigenspectrum of $\Sigma_c$: directions of larger conditional variance receive larger gains, and zero-variance directions receive none.
Since $\Sigma_c$ is the covariance of the teacher, this weighting reflects the variation that the data admit under $c$, including variation that the student does not currently produce.
At a finite noise level, $A_t$ averages the posterior covariance along the segment and need not share the eigenvectors of $\Sigma_c$.

\textbf{Gaussian example.}
The following example gives a closed form in which the role of the spectrum can be read off; we do not assume elsewhere that video latents are Gaussian.
If $X \mid c \sim \mathcal{N}(\mu_c, \Sigma_c)$ with $\Sigma_c = U \operatorname{diag}(\lambda_i)\, U^\top$, the posterior covariance does not depend on $z$, $C_t = \Sigma_c (I + r_t \Sigma_c)^{-1}$, and $A_t = r_t \Sigma_c (I + r_t \Sigma_c)^{-1}$.
Projecting $b_t$ onto the $i$-th eigenvector $u_i$ gives $u_i^\top b_t = h_i(t)\, u_i^\top \delta$ with
\begin{equation}
h_i(t) = \frac{r_t \lambda_i}{1 + r_t \lambda_i} \in [0, 1).
\label{eq:transfer-gain}
\end{equation}
We call $h_i(t)$ the \emph{transfer gain} of direction $u_i$: the fraction of the trajectory difference along $u_i$ that DFD adds to the DMD update.
The gain increases with $\lambda_i$ at every noise level, and for $\lambda_i > \lambda_j > 0$ the ratio $h_i(t)/h_j(t) = \lambda_i (1 + r_t \lambda_j) / [\lambda_j (1 + r_t \lambda_i)]$ decreases monotonically from $\lambda_i / \lambda_j$ as $r_t \to 0$ to $1$ as $r_t \to \infty$.
A lower SNR therefore makes the transfer more selective and weaker in every direction, and a higher SNR raises the gain of every positive-variance direction toward one.

\textbf{A conditional noise window.}
Suppose that, once the reference frame, audio, and text are fixed, the motion to be recovered lies mainly in directions with a large conditional variance $\lambda_{\mathrm{motion}}$, whereas frame-specific details that should not be copied from the paired video lie in directions with a small conditional variance $\lambda_{\mathrm{detail}}$.
If $\lambda_{\mathrm{motion}} \gg \lambda_{\mathrm{detail}}$, then in the range
\begin{equation}
\frac{1}{\lambda_{\mathrm{motion}}} \lesssim r_t \ll \frac{1}{\lambda_{\mathrm{detail}}}
\label{eq:selective-window}
\end{equation}
the motion gain is of order one, with $h_{\mathrm{motion}} \ge 1/2$ once $r_t \lambda_{\mathrm{motion}} \ge 1$, while $h_{\mathrm{detail}} \approx r_t \lambda_{\mathrm{detail}} \ll 1$.
Within this window, DFD transfers the motion difference between the real and generated videos with an order-one gain and strongly suppresses the cross-trajectory pull along detail directions.
The correspondence between high-variance directions and motion is a task-level assumption.
The observation in \S\ref{sec:observation} that the non-mouth person region admits many plausible head motions, expressions, and gestures for the same speech input is consistent with this assumption.

\subsection{Cross-Trajectory Regression at Low Noise}
\label{app:theory-low}

At low noise, the full DFD update approaches a regression onto the paired real video, for general teacher and student distributions.

\begin{proposition}[Low-noise regression]
\label{prop:regression}
Under the setup above, let $d$ be the latent dimensionality. Then
\begin{equation}
\mathbb{E}\bigl\| u_{\mathrm{DFD}} - (x^s - x^r) \bigr\|^2 \le \frac{4d}{r_t},
\label{eq:mse-bound}
\end{equation}
so $u_{\mathrm{DFD}} \to x^s - x^r$ in mean square as $r_t \to \infty$.
If, in addition, the generator Jacobian $J_\theta := \partial G_\theta(\xi, c) / \partial \theta$ satisfies $\|J_\theta\|_{\mathrm{op}} \le K_J$, then the expected parameter update $\mathbb{E}[J_\theta^\top u_{\mathrm{DFD}}]$ differs by at most $2 K_J \sqrt{d / r_t}$ from the gradient of the paired regression objective
\begin{equation}
\mathcal{L}_{\mathrm{pair}}(\theta \mid c) = \tfrac{1}{2}\, \mathbb{E}_{\xi, x^r \mid c} \|G_\theta(\xi, c) - x^r\|^2
= \tfrac{1}{2}\, \mathbb{E}_{\xi \mid c} \|G_\theta(\xi, c) - \mu_c\|^2 + \tfrac{1}{2} \operatorname{tr} \Sigma_c .
\label{eq:Lpair}
\end{equation}
\end{proposition}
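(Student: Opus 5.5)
The plan is to split the deviation of $u_{\mathrm{DFD}}$ from $x^s - x^r$ into two denoising errors and to bound each one by the minimum-mean-square-error (MMSE) property of conditional-mean denoisers. By Eq.~\ref{eq:dfd}, and with $m_{F,t}, m_{T,t}$ the exact conditional-mean denoisers of the setup,
\begin{equation*}
u_{\mathrm{DFD}} - (x^s - x^r) = \bigl(m_{F,t}(z_t^s) - x^s\bigr) - \bigl(m_{T,t}(z_t^r) - x^r\bigr) =: e^s - e^r .
\end{equation*}
Using $\|a - b\|^2 \le 2\|a\|^2 + 2\|b\|^2$, it suffices to show that $\mathbb{E}\|e^s\|^2 \le d/r_t$ and $\mathbb{E}\|e^r\|^2 \le d/r_t$.

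For $e^r$, the pair $(x^r, z_t^r)$ is distributed exactly as $(X, Z_t)$ under the teacher, because $x^r$ is a conditional data draw, $\epsilon$ is an independent standard Gaussian, and the teacher equals the data distribution by assumption. Since $m_{T,t}(z) = \mathbb{E}_T[X \mid Z_t = z, c]$ minimizes mean-square error over all measurable estimators of $X$ from $Z_t$, I will compare it with the naive estimator $z/\alpha_t$. This gives
\begin{equation*}
\mathbb{E}\|e^r\|^2 \le \mathbb{E}\|z_t^r/\alpha_t - x^r\|^2 = (\sigma_t^2/\alpha_t^2)\,\mathbb{E}\|\epsilon\|^2 = d/r_t .
\end{equation*}
The same argument applies verbatim to $e^s$. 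Here $x^s = G_\theta(\xi, c)$ with $\xi \perp \epsilon$, so $(x^s, z_t^s)$ is distributed as $(X, Z_t)$ under the current student distribution, and $m_{F,t}$ is by assumption the exact conditional mean for that distribution. Combining the two bounds gives Eq.~\ref{eq:mse-bound}, and mean-square convergence follows as $r_t \to \infty$.

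For the parameter statement, I first verify the identity in Eq.~\ref{eq:Lpair}. Since $x^r \perp \xi$ given $c$, I write $G_\theta - x^r = (G_\theta - \mu_c) - (x^r - \mu_c)$ and expand the square. The cross term vanishes because $\mathbb{E}[x^r - \mu_c \mid c] = 0$ and $x^r$ is independent of $G_\theta$, leaving $\tfrac12 \operatorname{tr}\Sigma_c$. Next, I differentiate under the expectation, which I will justify by the finite second moments together with $\|J_\theta\|_{\mathrm{op}} \le K_J$. This gives
\begin{equation*}
\nabla_\theta \mathcal{L}_{\mathrm{pair}} = \mathbb{E}\bigl[J_\theta^\top (x^s - x^r)\bigr],
\end{equation*}
so
\begin{equation*}
\bigl\|\mathbb{E}[J_\theta^\top u_{\mathrm{DFD}}] - \nabla_\theta \mathcal{L}_{\mathrm{pair}}\bigr\| \le \mathbb{E}\bigl[\|J_\theta\|_{\mathrm{op}}\,\|e^s - e^r\|\bigr] \le K_J \bigl(\mathbb{E}\|e^s - e^r\|^2\bigr)^{1/2} \le 2K_J\sqrt{d/r_t},
\end{equation*}
where the middle step is Jensen's inequality and the last step uses Eq.~\ref{eq:mse-bound}.

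I expect the only delicate point to be the fake-score term. The MMSE comparison is legitimate only if $z_t^s$ is genuinely a noisy observation of a sample from the distribution whose posterior mean $m_{F,t}$ computes. This requires that the student distribution is the law of $G_\theta(\xi, c)$ at the current $\theta$, that $\epsilon$ is independent of $\xi$, and that $\theta$ is held fixed, with no stop-gradient subtleties entering the denoiser. Once this is made explicit, the remaining steps are routine; the measurability and interchange of gradient and expectation should follow from the square-integrability assumptions in the setup.
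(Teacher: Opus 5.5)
Your proposal is correct and follows essentially the same route as the paper's proof. You split $u_{\mathrm{DFD}} - (x^s - x^r)$ into the fake and teacher denoising errors and bound each by $d/r_t$ via the MMSE comparison with the naive estimator $z/\alpha_t$. You then combine the two with $\|a-b\|^2 \le 2\|a\|^2 + 2\|b\|^2$ and use $\nabla_\theta \mathcal{L}_{\mathrm{pair}} = \mathbb{E}[J_\theta^\top (x^s - x^r)]$ with Jensen's inequality, exactly as the paper does; your explicit checks of the joint laws of $(x^r, z_t^r)$ and $(x^s, z_t^s)$, and your cross-term expansion of $\mathcal{L}_{\mathrm{pair}}$, simply spell out what the paper states in one line each.
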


\begin{proof}
The conditional mean minimizes the mean-squared reconstruction error, so its error is at most $d / r_t$, the error of the naive estimator $Z_t / \alpha_t$.
Applying this bound to the student term $m_{F,t}(z_t^s) - x^s$ and to the teacher term $m_{T,t}(z_t^r) - x^r$, and using $\|a - b\|^2 \le 2\|a\|^2 + 2\|b\|^2$, gives Eq.~\ref{eq:mse-bound}.
The gradient of $\mathcal{L}_{\mathrm{pair}}$ is $\mathbb{E}[J_\theta^\top (x^s - x^r)]$, so Jensen's inequality and Eq.~\ref{eq:mse-bound} bound the difference by $K_J \sqrt{4d / r_t}$.
The second form of $\mathcal{L}_{\mathrm{pair}}$ follows from $\mu_c = \mathbb{E}[x^r \mid c]$ and the conditional independence of $\xi$ and $x^r$.
\end{proof}

The objective $\mathcal{L}_{\mathrm{pair}}$ pulls outputs from all seeds toward the same conditional mean.
When matched real videos differ in hand positions, head angles, or body poses, this cross-trajectory averaging can blur structural details in individual generations.
The bound in Eq.~\ref{eq:mse-bound} grows with $d$ and characterizes the limit $r_t \to \infty$.

\subsection{A Teacher with Static and Dynamic Modes}
\label{app:theory-modes}

To compare the teacher queries of DFD and DMD when the student has collapsed onto a static video, we consider a teacher with one static mode and two dynamic modes.
Let
\[
F(\rho, \varphi) = \bigl(\rho,\ \ell\cos\varphi,\ \ell\sin\varphi\bigr) \in \mathbb{R}^3,
\]
where $\rho$ measures how far a video moves away from the static video and $\varphi$ selects the action.
The teacher places mass $w_0$ on the static video $x_0 = F(0, 0)$ and mass $w_\pm = (1 - w_0)/2$ on each dynamic video $x_\pm = F(D, \pm\varphi_*)$ with $0 < \varphi_* < \pi/2$, and the real video is fixed to $x^r = x_+$.
The student is
\[
G_\theta(\xi) = F(\bar\rho + v\xi,\ \bar\varphi + a\xi), \qquad \Pr(\xi = 1) = \Pr(\xi = -1) = \tfrac{1}{2},
\]
and all four parameters $\theta = (\bar\rho, v, \bar\varphi, a)$ are trained: $\bar\rho$ and $\bar\varphi$ move both seeds together, and $v$ and $a$ separate them.
In particular, the student can place both seeds on the same dynamic video, $\theta = (D, 0, \varphi_*, 0)$.
We measure the motion relative to the static video and the variation across seeds by
\begin{equation}
\begin{aligned}
\mathcal{A}(\theta) &= \mathbb{E}_\xi\|G_\theta(\xi) - x_0\|^2 = \bar\rho^2 + v^2 + 2\ell^2(1 - \cos a\cos\bar\varphi),\\
\mathcal{V}(\theta) &= \operatorname{tr}\operatorname{Cov}_\xi G_\theta(\xi) = v^2 + \ell^2\sin^2 a .
\end{aligned}
\label{eq:modes-metrics}
\end{equation}
Each dynamic video has $\mathcal{A} = D^2 + 2\ell^2(1 - \cos\varphi_*)$, and a student that places its two seeds on $x_+$ and $x_-$ has $\mathcal{V} = \ell^2\sin^2\varphi_*$.

\textbf{Which modes each query returns.}
Write $y = z_t/\alpha_t = x + \epsilon/\sqrt{r}$ with $r = r_t$.
The teacher posterior assigns weight $p_j(y) \propto w_j \exp(-r\|y - x_j\|^2/2)$ to $x_j$, and $m_{T,t} = \sum_j p_j x_j$.
Let $\Delta^2 = \|x_\pm - x_0\|^2 = D^2 + 2\ell^2(1 - \cos\varphi_*)$ and $L_{jk}(y) = w_j e^{-r\|y - x_j\|^2/2} / \bigl(w_k e^{-r\|y - x_k\|^2/2}\bigr)$.
Since $p_j \le \min\{1, L_{jk}\} \le \sqrt{L_{jk}}$, and $\mathbb{E}_\epsilon\sqrt{L_{jk}(x_k + \epsilon/\sqrt{r})} = \sqrt{w_j/w_k}\, e^{-r\|x_j - x_k\|^2/8}$,
\begin{equation}
\mathbb{E}_\epsilon\bigl[p_+ + p_-\bigr](x_0 + \epsilon/\sqrt{r}) \le \sqrt{\tfrac{2(1 - w_0)}{w_0}}\, e^{-r\Delta^2/8},
\qquad
\mathbb{E}_\epsilon\, p_0(x_+ + \epsilon/\sqrt{r}) \le \sqrt{\tfrac{2 w_0}{1 - w_0}}\, e^{-r\Delta^2/8}.
\label{eq:query-weights}
\end{equation}
When $r D^2 \gg 1$, the query at a static student therefore returns the static mode, and the query at the real video returns the dynamic modes.
How the second query divides its weight between $x_+$ and $x_-$ depends on $r\ell^2$, and for $r\ell^2 \ll 1$ each receives close to one half.
In the notation of Eq.~\ref{eq:selective-window}, $D$ plays the role of the motion scale and $\ell$ that of the detail scale.
With $(D, \ell, \varphi_*, w_0, r) = (25, 1, \pi/3, 0.1, 0.1)$, so that $rD^2 = 62.5$ and $r\ell^2 = 0.1$, the query at $x_0$ gives the two dynamic modes a total weight of $1.7\times10^{-4}$, and the query at $x_+$ gives $x_+$ and $x_-$ the weights $0.535$ and $0.465$.

\begin{proposition}[DFD fixed point]
\label{prop:dfd-modes}
Let $\tau = \mathbb{E}_\epsilon\, m_{T,t}(x^r + \epsilon/\sqrt{r}) = (\tau_\rho,\ M\cos\beta,\ M\sin\beta)$ and $B(\kappa) = \mathbb{E}_{Z\sim\mathcal{N}(0,1)}\tanh(\kappa + \sqrt{\kappa}\,Z)$.
With an exact fake score, the unweighted DFD population gradient is
\begin{equation}
\begin{aligned}
g_{\bar\rho} &= \bar\rho - \tau_\rho, &
g_v &= v\,B(\kappa),\\
g_{\bar\varphi} &= \ell M\cos a\,\sin(\bar\varphi - \beta), &
g_a &= \ell\sin a\,\bigl[M\cos(\bar\varphi - \beta) - \ell\,(1 - B(\kappa))\cos a\bigr],
\end{aligned}
\label{eq:dfd-modes-grad}
\end{equation}
where $\kappa = r(v^2 + \ell^2\sin^2 a)$.
If $0 < M < \ell$, the gradient flow $\dot\theta = -g$ has the locally stable fixed point $\theta^* = (\tau_\rho, 0, \beta, a^*)$, where $a^* \in (0, \pi/2)$ is the unique solution of $[1 - B(r\ell^2\sin^2 a^*)]\cos a^* = M/\ell$.
\end{proposition}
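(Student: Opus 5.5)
The plan is to compute the population gradient directly, using three facts: the two-point structure of the student, the stop-gradient convention on both score queries, and the independence of $\xi$, $x^r$ and $\epsilon$. Then I will analyze the resulting four-dimensional flow near $\theta^*$.

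\textbf{Step 1 (teacher term).} Because $x^r$ and $\epsilon$ are independent of $\xi$ and carry no $\theta$-dependence, the teacher query contributes $-\mathbb{E}_\xi[J_\xi^\top]\,\tau$. Here $J_\xi = \partial G_\theta(\xi)/\partial\theta$ and $\tau$ is as in the statement.

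\textbf{Step 2 (fake term).} With an exact fake score, $m_{F,t}$ is the posterior mean for the two-point law $\{G_\theta(1), G_\theta(-1)\}$ with weights $\tfrac12$. Write $c_\theta = \tfrac12(G_\theta(1)+G_\theta(-1))$ and $h = \tfrac12(G_\theta(1)-G_\theta(-1))$. Expanding $\cos(\bar\varphi \pm a)$ and $\sin(\bar\varphi \pm a)$ gives
\[
h = (v,\ -\ell\sin\bar\varphi\sin a,\ \ell\cos\bar\varphi\sin a), \qquad \|h\|^2 = v^2 + \ell^2\sin^2 a,
\]
so $\kappa = r\|h\|^2$. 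The standard two-point posterior computation gives $m_F(y) = c_\theta + \tanh\bigl(r\,h^\top(y - c_\theta)\bigr)\,h$. At the query $y = G_\theta(\xi) + \epsilon/\sqrt r$ the argument is $\xi\kappa + \sqrt{r}\,h^\top\epsilon$. This is distributed as $\xi(\kappa + \sqrt\kappa Z)$ with $Z \sim \mathcal N(0,1)$. Hence
\[
\mathbb{E}_\epsilon m_F = c_\theta + \xi B(\kappa) h = G_\theta(\xi) - \xi(1 - B(\kappa))h .
\]
The expected per-seed update is therefore $G_\theta(\xi) - \xi(1-B)h - \tau$, and the gradient is $g = \mathbb{E}_\xi\bigl[J_\xi^\top\bigl(G_\theta(\xi) - \xi(1-B)h - \tau\bigr)\bigr]$.

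\textbf{Step 3 (read off the components).} The partial derivatives of $G_\theta(\xi)$ are as follows. With respect to $\bar\rho$ it is $e_1$, and with respect to $v$ it is $\xi e_1$. With respect to $\bar\varphi$ it is the tangent $T_\xi = \ell(0, -\sin\psi_\xi, \cos\psi_\xi)$, where $\psi_\xi = \bar\varphi + a\xi$, and with respect to $a$ it is $\xi T_\xi$. Using $T_\xi^\top G_\theta(\xi) = 0$ and $T_\xi^\top(0, \cos\beta, \sin\beta) = \ell\sin(\beta - \psi_\xi)$, averaging over $\xi = \pm 1$ yields Eq.~\ref{eq:dfd-modes-grad}. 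Two auxiliary identities are needed:
\[
T_\xi^\top h = \xi\,\ell^2\sin a\cos a, \qquad \mathbb{E}_\xi\,\xi\sin(\psi_\xi - \beta) = \cos(\bar\varphi - \beta)\sin a .
\]
These are routine trigonometric averaging steps.

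\textbf{Step 4 (fixed point and stability).} Setting $\bar\rho = \tau_\rho$, $v = 0$ and $\bar\varphi = \beta$ kills the first three components. The last component then reduces to $\sin a\,[M - \ell f(a)]$ with $f(a) = (1 - B(r\ell^2\sin^2 a))\cos a$. The key lemma is that $B$ satisfies $B(0) = 0$, $B < 1$, and is strictly increasing in $\kappa$. I would prove this via the Gaussian identity $B(\kappa) = \mathbb{E}\tanh^2(\kappa + \sqrt\kappa Z)$, the Nishimori-type relation for the binary channel, and then differentiate, or invoke the known monotonicity of the MMSE of a binary input. Given the lemma, $f$ is continuous and strictly decreasing from $1$ to $0$ on $[0, \pi/2]$. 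So $0 < M < \ell$ gives a unique root $a^* \in (0, \pi/2)$ of $f(a^*) = M/\ell$.

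For stability, I compute the Jacobian of $g$ at $\theta^*$. All off-diagonal entries vanish at $\theta^*$, for three reasons:
\begin{itemize}
\item $\partial\kappa/\partial v = 2rv = 0$;
\item $g_v$ is proportional to $v$;
\item the derivative of $\cos(\bar\varphi - \beta)$ at $\bar\varphi = \beta$ is zero, and $g_{\bar\varphi}$ contains the factor $\sin(\bar\varphi - \beta)$.
\end{itemize}
The diagonal entries are $1$, $B(\kappa^*) > 0$, $\ell M\cos a^* > 0$, and $-\ell^2\sin a^*\, f'(a^*) > 0$. So the linearization of $\dot\theta = -g$ is strictly contracting, which gives local stability.

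\textbf{Main obstacle.} I expect the hardest parts to be the monotonicity and strict bound $B < 1$ for $B$, which is where the whole fixed-point argument rests, and keeping the signs straight in the $g_a$ averaging.
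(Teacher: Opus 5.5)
Your plan follows the paper's proof step for step: the two-point posterior mean, the $\epsilon$-average $c_\theta + \xi B(\kappa)h$, the teacher term averaging to $\tau$, the chain rule over $\xi=\pm1$, and a diagonal Jacobian at $\theta^*$. However, the first auxiliary identity in Step~3 is wrong, and used as written it derails the computation. With $T_\xi = \ell(0,-\sin\psi_\xi,\cos\psi_\xi)$ and $h = (v,-\ell\sin\bar\varphi\sin a,\ell\cos\bar\varphi\sin a)$,
\[
T_\xi^\top h = \ell^2\sin a\,\bigl(\sin\psi_\xi\sin\bar\varphi + \cos\psi_\xi\cos\bar\varphi\bigr) = \ell^2\sin a\cos(\xi a) = \ell^2\sin a\cos a ,
\]
which has no factor $\xi$. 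The $\xi$ belongs to $(\partial_a G_\theta(\xi))^\top h = \xi\,T_\xi^\top h$.

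If you average with your version, the fake-score piece $-\xi(1-B)h$ contributes $-(1-B)\ell^2\sin a\cos a$ to $g_{\bar\varphi}$ and nothing to $g_a$. Then $g_{\bar\varphi}\neq 0$ at $\bar\varphi=\beta$, and $g_a = \ell M\sin a\cos(\bar\varphi-\beta)$ loses exactly the term that defines $a^*$, so $\theta^*$ would not even be a fixed point. With the corrected identity, that piece is odd in $\xi$ inside $g_{\bar\varphi}$ and averages to $0$. Inside $g_a$ it is even and gives $-(1-B)\ell^2\sin a\cos a$. Together with your second identity, which is correct, you then recover Eq.~\ref{eq:dfd-modes-grad} exactly.

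After this fix, Step~4 goes through as you describe and matches the paper's argument, including the four diagonal entries. (The reduced last component is $\ell\sin a\,[M-\ell f(a)]$; you dropped a harmless factor $\ell$.) On the lemma about $B$ you do more than the paper, which only asserts that $[1-B(r\ell^2\sin^2 a)]\cos a$ decreases strictly from $1$ to $0$. Two observations simplify your plan:
\begin{itemize}
\item Your Nishimori form $B(\kappa)=\mathbb{E}\tanh^2(\kappa+\sqrt{\kappa}Z)$ gives $0<B(\kappa)<1$ for $\kappa>0$ immediately. That is all you need for the entry $B(\kappa^*)>0$.
\item Because $\cos a$ is already strictly decreasing, both the uniqueness of $a^*$ and $f'(a^*)<0$ need only that $B$ be nondecreasing. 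Equivalently, the binary-input MMSE $1-B(\kappa)$ must not increase with SNR; this is the same MMSE interpretation the paper uses in the proof of Proposition~\ref{prop:dmd-modes}.
\end{itemize}
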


\begin{proof}
The student has two equally likely outputs with midpoint $\bar x$ and half difference $\hat x$, where $\|\hat x\|^2 = v^2 + \ell^2\sin^2 a$, so its posterior mean, as a function of $y$, is $\bar x + \hat x\tanh(r\langle y - \bar x, \hat x\rangle)$.
At $y = G_\theta(\xi) + \epsilon/\sqrt{r}$, averaging over $\epsilon$ gives $\bar x + \xi\hat x\, B(r\|\hat x\|^2)$.
The teacher term averages to $\tau$ because $\xi$ and $\epsilon$ are independent.
Multiplying the difference by $\partial G_\theta(\xi)/\partial\theta$ and averaging over $\xi$ gives Eq.~\ref{eq:dfd-modes-grad}.
At a fixed point with $0 < a < \pi/2$, $g_{\bar\rho} = g_v = 0$ give $\bar\rho = \tau_\rho$ and $v = 0$, and $g_{\bar\varphi} = g_a = 0$ give $\bar\varphi = \beta$ together with the equation for $a^*$, since $\bar\varphi = \beta + \pi$ would make the bracket in $g_a$ negative.
The left-hand side of that equation decreases strictly from $1$ to $0$ on $(0, \pi/2)$.
The Jacobian of $g$ at $\theta^*$ is diagonal with the positive entries $1$, $B(r\ell^2\sin^2 a^*)$, $\ell M\cos a^*$, and $-\ell^2\sin a^*\,\frac{\mathrm{d}}{\mathrm{d}a}\bigl\{[1 - B(r\ell^2\sin^2 a)]\cos a\bigr\}\big|_{a = a^*}$.
\end{proof}

For the exact teacher, the angular part of $\tau$ is a convex combination of distinct points on the circle of radius $\ell$, so $M < \ell$ at every finite $r$, and the stable fixed point places the two seeds on the distinct dynamic videos $F(\tau_\rho, \beta \pm a^*)$.
Seed differences grow even after the common action matches the target: at $v = 0$ and $\bar\varphi = \beta$, $\dot a = \ell(\ell - M)\,a + o(a)$.
Up to the static weight bounded in Eq.~\ref{eq:query-weights}, $\tau = \bigl(D,\ \ell\cos\varphi_*,\ \ell\sin\varphi_*\, B(r\ell^2\sin^2\varphi_*)\bigr)$.
When $rD^2 \to \infty$ and $r\ell^2 \to 0$, $\tau \to (D, \ell\cos\varphi_*, 0)$, so $\theta^* \to (D, 0, 0, \varphi_*)$ and the two seeds approach $x_+$ and $x_-$.
As $r \to \infty$ instead, the query at $x^r$ resolves the dynamic mode, $\tau \to x_+$, $M \to \ell$, and $a^* \to 0$, so both seeds copy the real video, in line with Proposition~\ref{prop:regression}.

\begin{proposition}[DMD invariant region]
\label{prop:dmd-modes}
Let $(D, \ell, \varphi_*, w_0, r) = (25, 1, \pi/3, 0.1, 0.1)$.
The box $\mathcal{U} = \{|\bar\rho \pm v| \le 0.6,\ |\bar\varphi \pm a| \le 0.2\}$ is invariant under the flow of the unweighted DMD population gradient at this noise level.
Along every DMD trajectory that starts in $\mathcal{U}$, $\mathcal{A} \le 0.36 + 2(1 - \cos 0.2) < 0.40$ and $\mathcal{V} \le 0.36 + \sin^2 0.2 < 0.40$.
\end{proposition}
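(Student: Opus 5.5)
The plan is to work in per-seed coordinates. In these coordinates the box $\mathcal{U}$ becomes a product of intervals, and invariance reduces to a sign check on each face. Write $\rho_\pm = \bar\rho \pm v$ and $\varphi_\pm = \bar\varphi \pm a$, so that seed $\xi = \pm 1$ outputs $F(\rho_\xi, \varphi_\xi)$ and $\mathcal{U} = [-0.6, 0.6]^2 \times [-0.2, 0.2]^2$ in $(\rho_+, \rho_-, \varphi_+, \varphi_-)$. It suffices to show that on each face the DMD flow $\dot\theta = -g_{\mathrm{DMD}}$ does not point outward, for example $\dot\rho_+ \le 0$ on $\{\rho_+ = 0.6\}$. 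By symmetry ($\rho \mapsto -\rho$ does not quite hold for the teacher, so the faces must be checked separately but analogously) we treat the upper $\rho_+$ face and the upper $\varphi_+$ face in detail.

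\textbf{Step 1: per-seed form of the gradient.} As in the proof of Proposition~\ref{prop:dfd-modes}, the fake term averaged over $\epsilon$ at seed $\xi$ equals $\bar x + \xi \hat x\, B(\kappa)$. The only change from DFD is that the teacher is now queried at $G_\theta(\xi) + \epsilon/\sqrt r$. Since $\bar\rho$ and $v$ enter seed $\xi$ only through $\rho_\xi = \bar\rho + \xi v$, the combination $g_{\bar\rho} + g_v$ is exactly the seed-$+$ contribution of the $\rho$-component of $\mathbb{E}_\epsilon[m_{F,t} - m_{T,t}]$, up to the factor $1/2$ from averaging over $\xi$. The same holds for the $\varphi$ coordinates, after contracting with $\partial_\varphi F = (0, -\ell\sin\varphi, \ell\cos\varphi)$. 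Hence $\dot\rho_+$ is proportional to $\mathbb{E}_\epsilon[m_T]_\rho - [m_F]_\rho$ evaluated at seed $+$, and similarly for $\dot\varphi_+$.

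\textbf{Step 2: the fake term pushes inward with a quantifiable margin.} On the face $\rho_+ = 0.6$ we have
\[
[m_F]_\rho = \bar\rho + vB = \rho_+ - v\bigl(1 - B(\kappa)\bigr), \qquad v \in [0, 0.6], \qquad \kappa \ge r v^2 .
\]
Because $B$ is increasing, this is at least $\min_{v \le 0.6}\bigl[0.6 - v\bigl(1 - B(0.1 v^2)\bigr)\bigr] =: \eta_\rho > 0$, a small explicit constant (of order $10^{-2}$). The angular face is analogous: the tangential component of the fake mean at $\varphi_+ = 0.2$ is a convex-type combination of the two seed angles, weighted by $(1 \pm B)/2$, and it yields a margin $\eta_\varphi > 0$ pointing toward smaller $\varphi_+$.

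\textbf{Step 3: the teacher term is essentially $x_0$ on $\mathcal{U}$.} This is the main obstacle. We must bound the dynamic posterior weight $\mathbb{E}_\epsilon[p_+ + p_-]$ uniformly over query points $y = x + \epsilon/\sqrt r$ with $x = F(\rho, \varphi)$, $|\rho| \le 0.6$, $|\varphi| \le 0.2$. We then need $D \cdot \mathbb{E}_\epsilon[p_+ + p_-] < \eta_\rho$, together with the analogous tangential bound $< \eta_\varphi$. The natural first attempt repeats the $\sqrt{L_{jk}}$ argument behind Eq.~\ref{eq:query-weights}, now centered at $x$ instead of $x_0$. This gives a bound of the form
\[
\sqrt{w_\pm / w_0}\, \exp\!\Bigl(-\tfrac{r}{8}\bigl(\|x_\pm - x\|^2 + \|x_0 - x\|^2 \cdot (\text{cross terms})\bigr)\Bigr)\!,
\]
with $\|x_\pm - x\| \ge 24.4$. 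This crude bound may be too weak against $D = 25$ and the margin $\eta_\rho \sim 10^{-2}$. If it fails, I would instead use $p_j \le L_{j0}$ directly (no square root), which after the Gaussian average gives exponent $-r\|x_j - x\|^2/2$ up to a shift. That is far smaller, roughly $e^{-29}$, and so comfortably below both margins. For the angular face I would also check that the residual teacher contribution from $x_0$ itself is harmless: $x_0$ has angle $0$, so it points inward.

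\textbf{Step 4: the bounds on $\mathcal{A}$ and $\mathcal{V}$.} These then follow directly from Eq.~\ref{eq:modes-metrics}:
\[
\bar\rho^2 + v^2 = \tfrac12(\rho_+^2 + \rho_-^2) \le 0.36, \qquad \cos a \cos\bar\varphi = \tfrac12(\cos\varphi_+ + \cos\varphi_-) \ge \cos 0.2 .
\]
For $\mathcal{V}$, we have $v^2 \le 0.36$, and $|a| = |\varphi_+ - \varphi_-|/2 \le 0.2$ gives $\sin^2 a \le \sin^2 0.2$.
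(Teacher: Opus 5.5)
\textbf{Genuine gap in Step 3.} Your setup is the same as the paper's: per-seed coordinates $\rho_\pm,\varphi_\pm$, a sign check on each face, and the teacher written as $x_0$ plus a small dynamic weight $e=e_++e_-$. But Step 3, which you rightly call the main obstacle, does not close with either bound you propose.

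On the face $\rho_+=0.6$ the worst case is $\bar\rho=0$, $v=0.6$, $a=0$. There the fake margin is only $0.6\,B(0.036)\approx 0.021$, so you need $De\lesssim 0.02$, i.e.\ $e<8\times10^{-4}$.
\begin{itemize}
\item The recentered square-root bound is $\mathbb{E}_\epsilon\sqrt{L_{j0}}=\sqrt{w_j/w_0}\,\exp\!\left(-r\Delta^2/8+\tfrac{r}{2}\langle x-x_0,\,x_j-x_0\rangle\right)$. On that face this is about $1.8\times10^{-3}$ per mode, so $De\approx 0.09$ and the check fails.
\item Your fallback without the square root is vacuous rather than sharper. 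Since $\log L_{j0}$ is Gaussian with variance $r\Delta^2$, one gets $\mathbb{E}_\epsilon L_{j0}(x+\epsilon/\sqrt{r})=(w_j/w_0)\exp\left(r\langle x-x_0,\,x_j-x_0\rangle\right)$. This equals $4.5$ at $x=x_0$ and is about $20$ on the face $\rho_+=0.6$, which is worse than the trivial $p_j\le 1$. The lognormal variance term $+r\Delta^2/2\approx 31.3$ cancels the exponent $-r\|x_j-x\|^2/2$ you computed. Your $e^{-29}$ is the median of $L_{j0}$, not its mean.
\end{itemize}

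The missing device is the paper's bound $p_j\le\min\{1,L_{j0}\}$ together with the exact Gaussian expectation. For $X\sim\mathcal{N}(\mu,s^2)$ and $\Phi$ the standard normal CDF, $\mathbb{E}\min\{1,e^{X}\}=\Phi(\mu/s)+e^{\mu+s^2/2}\,\Phi(-\mu/s-s)$. Here $s^2=r\Delta^2$ and $\mu$ is affine in the student output. This expectation increases in $\mu$, so over $\mathcal{U}$ it is maximized at $F(0.6,\pm0.2)$. That gives $e\le 6.5\times10^{-4}$ and hence $De<0.017$.

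Because the comparison $0.017$ versus $0.020$ is tight, you also need a certified lower bound on $B$ rather than ``of order $10^{-2}$''. The paper uses $B(\kappa)\ge\kappa/(1+\kappa)$, since the binary MMSE is at most the linear-estimator error. This yields $\bar\rho+vB\ge 0.6-v/(1+rv^2)>0.020$.

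\textbf{The angular face has the roles reversed.} Contract with $\partial_\varphi F(\varphi_+)$. The fake mean $\tfrac{1+B}{2}G_\theta(1)+\tfrac{1-B}{2}G_\theta(-1)$ then contributes $+\tfrac12(1-B)\,\ell^2\sin 2a\ge 0$ to $\dot\varphi_+$. It repels the two seeds, so it pushes \emph{outward} on the face $\varphi_+=0.2$.

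The only inward force is the pull of the static mode, $-\ell^2\sin\varphi_+$. It is not a harmless residual; it is the term that must win. The resulting flow is $\dot\varphi_+=-\ell^2\left[\sin\varphi_+-\tfrac12\sin 2a\,(1-B)\right]+\nu$. Its worst-case margin, at $a=0.2$ and $B\to0$, is only the curvature term $\sin 0.2\,(1-\cos 0.2)>0.0039$. That margin must then dominate $|\nu|\le 2\ell^2\sin(\varphi_*/2)\,e\le 6.5\times10^{-4}$. Your Step 4 bounds on $\mathcal{A}$ and $\mathcal{V}$ are correct.
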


\begin{proof}
DMD has the same fake term as DFD and replaces $\tau$ by the teacher output at the student sample, $\bar T(\xi) = \mathbb{E}_\epsilon\, m_{T,t}(G_\theta(\xi) + \epsilon/\sqrt{r}) = x_0 + \sum_{j=\pm} e_j(\xi)(x_j - x_0)$ with $e_j(\xi) = \mathbb{E}_\epsilon\, p_j$.
For $j = \pm$, $p_j \le \min\{1, L_{j0}\}$, and $\log L_{j0}$ is Gaussian in $\epsilon$ with variance $r\Delta^2$ and a mean that is affine in $G_\theta(\xi)$, so $\mathbb{E}_\epsilon\min\{1, L_{j0}\}$ has a closed form that increases with this mean.
Over the outputs allowed by $\mathcal{U}$, the mean is largest at $F(0.6, \pm 0.2)$, and evaluating the closed form there gives $e(\xi) := e_+(\xi) + e_-(\xi) \le 6.5\times10^{-4}$.
For the seed $\xi = 1$, with $\rho_+ = \bar\rho + v$ and $\varphi_+ = \bar\varphi + a$, the flow gives
\[
\begin{aligned}
\dot\rho_+ &= -\bigl(\bar\rho + vB(\kappa)\bigr) + D\,e(1),\\
\dot\varphi_+ &= -\ell^2\bigl[\sin(\bar\varphi + a) - \tfrac{1}{2}\sin 2a\,(1 - B(\kappa))\bigr] + \nu, \qquad |\nu| \le 2\ell^2\sin(\varphi_*/2)\, e(1).
\end{aligned}
\]
For a symmetric binary input at SNR $\kappa$, the minimum mean-squared error equals $1 - B(\kappa)$ and is at most the error $1/(1+\kappa)$ of the linear estimator, so $B(\kappa) \ge \kappa/(1+\kappa)$.
On the face $\rho_+ = 0.6$, $v \in [0, 0.6]$ and $\kappa \ge rv^2$, so $\bar\rho + vB(\kappa) = 0.6 - v + vB(\kappa) \ge 0.6 - v/(1 + rv^2) \ge 0.6\cdot 0.036/1.036 > 0.020$, while $D\,e(1) < 0.017$.
On the face $\varphi_+ = 0.2$, the bracket is at least $\sin 0.2\,(1 - \cos 0.2) > 0.0039$, while $2\ell^2\sin(\varphi_*/2)\, e(1) < 0.00065$.
On the face $\rho_+ = -0.6$, the same bound holds with the opposite sign, and the teacher term $D\,e(1) \ge 0$ also points inward; the face $\varphi_+ = -0.2$ and the faces of the seed $\xi = -1$ follow by symmetry, so the flow points into $\mathcal{U}$ on its whole boundary.
\end{proof}

\textbf{Comparison at the same noise level.}
For the parameters of Proposition~\ref{prop:dmd-modes}, $\tau = (24.9995, 0.5000, 0.0606)$, so $M = 0.504 < \ell$ and Proposition~\ref{prop:dfd-modes} gives $\theta^* = (24.9995, 0, 0.121, 1.001)$.
From the same initialization $\theta_0 = (0.01, 0.02, 0.01, 0.02) \in \mathcal{U}$, the DFD flow converges to $\theta^*$, and the DMD flow settles at $(0.0045, 0.227, 0, 0)$.
Near the static video, the query at the student returns the static mode (Eq.~\ref{eq:query-weights}), so DMD receives almost no supervision toward the dynamic modes.
The query at the real video returns both dynamic modes, so $\tau$ carries their motion, and its angular part lies strictly inside the circle traced by $F$.
DFD therefore moves the common motion to $\tau_\rho$ and separates the seeds between the two actions.
Table~\ref{tab:static-modes} also lists the DFD fixed point at $r = 10$: it has almost the same $\mathcal{A}$ as at $r = 0.1$ and a much smaller $\mathcal{V}$, because the query then resolves the dynamic mode and both seeds copy the real video.

\begin{table}[H]
\centering
\caption{\textbf{Motion and seed variation in the model of Appendix~\ref{app:theory-modes}.} The parameters are $(D, \ell, \varphi_*, w_0) = (25, 1, \pi/3, 0.1)$, and $\mathcal{A}$ and $\mathcal{V}$ follow Eq.~\ref{eq:modes-metrics}. Each dynamic video of the teacher has $\mathcal{A} = 626$, and seeds placed on $x_+$ and $x_-$ give $\mathcal{V} = 0.75$.}
\label{tab:static-modes}
\vspace{6pt}
\small
\setlength{\tabcolsep}{8pt}
\renewcommand{\arraystretch}{1.15}
\begin{tabular}{@{}lccc@{}}
\toprule
Update & $r$ & $\mathcal{A}$ & $\mathcal{V}$ \\
\midrule
DMD, from $\theta_0$ & 0.1 & 0.052 & 0.052 \\
DFD, fixed point & 0.1 & 625.90 & 0.709 \\
DFD, fixed point & 10 & 625.99 & 0.0007 \\
\bottomrule
\end{tabular}
\end{table}

\subsection{Implications for Noise Routing}
\label{app:theory-implications}

The analysis separates three regimes of the query noise level.
(i)~As $r_t \to 0$, the correction $b_t$ of Proposition~\ref{prop:selectivity} vanishes, and the unweighted DFD update approaches the DMD update.
(ii)~At finite high noise, DFD adds a covariance-weighted correction to DMD.
Under the spectral conditions of Eq.~\ref{eq:selective-window}, this correction transfers motion differences with an order-one gain and suppresses detail differences.
In the model of \S\ref{app:theory-modes}, where the student starts near a static mode of the teacher, the query at the real video returns the dynamic modes, so DFD moves the student to them and separates its seeds between two actions, while DMD at the same noise level stays near the static mode.
(iii)~At low noise, DFD approaches the paired regression of Proposition~\ref{prop:regression}, which pulls all seeds toward the same mean and toward the frame-level layout of a different real video; in the model of \S\ref{app:theory-modes}, both seeds then copy the real video.

NR-DFD (Eq.~\ref{eq:nr-dfd}) is designed to use regime (ii) and to leave regime (iii) to DMD, which refines details along the student's own trajectory.
Its single threshold also includes the highest noise levels of regime (i), where the unweighted difference between DFD and DMD vanishes.
The common intuition that high noise levels govern coarse structure and low noise levels govern details applies to these query levels as a statement about which variations each objective emphasizes.
Whether conditional diversity improves in practice is an empirical question, which we address with the multi-seed diversity and temporal dynamics metrics in Tables~\ref{tab:forcing-comparison} and~\ref{tab:rf-ablation}.

\section{Mask Analysis}
\label{app:mask}

\begin{table}[H]
\centering
\caption{\textbf{Mask alignment between real video and student rollout video.} Person masks are extracted via a segmentation model, and mouth masks via facial landmarks.}
\label{tab:mask-alignment}
\vspace{6pt}
\small
\setlength{\tabcolsep}{5pt}
\renewcommand{\arraystretch}{1.15}
\begin{tabular}{@{}cccc@{}}
\toprule
Person coverage (\%) $\uparrow$ &
Person IoU (\%) $\uparrow$ &
Mouth coverage (\%) $\uparrow$ &
Mouth IoU (\%) $\uparrow$ \\
\midrule
94.49 & 89.73 & 43.61 & 32.74 \\
\bottomrule
\end{tabular}
\end{table}

\vspace{-1.5em}
\begin{table}[H]
\centering
\caption{\textbf{Mouth mask source ablation.} Comparison of training with mouth masks from the student rollout video versus ground-truth video.}
\label{tab:mouth-mask-source}
\vspace{6pt}
\small
\setlength{\tabcolsep}{5pt}
\renewcommand{\arraystretch}{1.15}
\begin{tabular}{@{}cccccc@{}}
\toprule
Mouth mask source &
FID $\downarrow$ &
FVD $\downarrow$ &
Sync-C $\uparrow$ &
Sync-D $\downarrow$ &
Step time (s) $\downarrow$ \\
\midrule
Student rollout &
\textbf{25.43} & 309.92 & \textbf{4.07} &
\textbf{10.94} & 63.73 \\
Real video &
25.49 & \textbf{307.91} & 4.03 &
10.99 & \textbf{61.11} \\
\bottomrule
\end{tabular}
\end{table}

\vspace{-1em}
\begin{figure}[H]
\centering
\includegraphics[width=\linewidth]{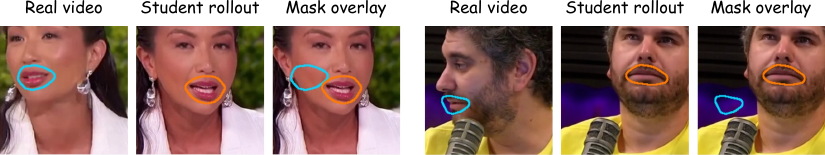}
\vspace{-1.7em}
\caption{\textbf{Comparison of mouth masks between real video and student rollout video.}}
\label{fig:mouth-mask-cases}
\end{figure}

To determine whether masks should be extracted from the student rollout video or the real video, we conduct an ablation study. Table~\ref{tab:mask-alignment} reports the mask alignment between real videos and student rollout videos: person-region coverage reaches 94.49\% with an IoU of 89.73\%, indicating highly consistent person placement, so we directly use person masks from real videos. Mouth coverage is only 43.61\% with an IoU of 32.74\%. As shown in Figure~\ref{fig:mouth-mask-cases}, mouth masks extracted from real videos exhibit noticeable misalignment with the actual mouth position in student rollout videos when the subject turns their head. We further compare training with mouth masks from the student rollout video versus the real video. As shown in Table~\ref{tab:mouth-mask-source}, using student-rollout mouth masks yields better Sync-C and Sync-D, with negligible additional time overhead (63.73 vs.\ 61.11\,s/step). We therefore extract mouth masks from the student rollout video.

\section{More Implementation Details}
\label{app:hyperparams}

Routed Forcing training uses the following setup: both the student and fake score use the Adam optimizer with $\beta_1 = 0.0$, $\beta_2 = 0.999$, no weight decay, no learning rate warmup or decay. The student learning rate is $1{\times}10^{-6}$ and the fake score learning rate is $2{\times}10^{-6}$. The per-GPU batch size is 1 on 8 GPUs with gradient accumulation of 2, yielding an effective batch size of 16. The student updates once every 5 fake score updates (1:5 ratio), i.e., the fake score trains at 5$\times$ the frequency of the student. The classifier-free guidance scale is 4.5. We sample $u\sim\mathcal{U}(0,1)$ and obtain the score timestep using the shift-5 schedule $t=5u/(1+4u)$. ``Top-50\%'' denotes the upper half of this shifted timestep distribution, implemented as $t\geq0.832$.

For dynamics evaluation, we exclude the conditioning frame and sample at 8 FPS. RAFT-Motion averages the largest 5\% of optical-flow magnitudes between consecutive sampled frames, normalized by the image diagonal and temporal interval. DINO-Temporal is the mean cosine distance between DINOv2 features of consecutive sampled frames.

\section{Additional Experimental Results}
\label{app:collapse_curve}
\label{app:l1_diversity}
\label{app:qualitative}

\begin{figure}[H]
\centering
\includegraphics[width=0.75\linewidth]{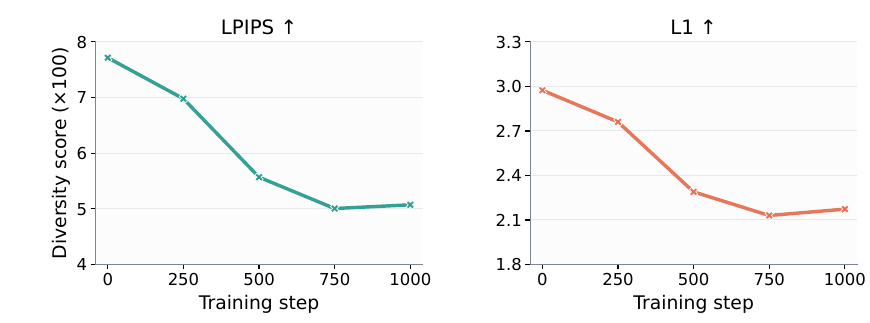}
\caption{\textbf{Diversity decreases during DMD training.} Starting from the LiveTalk checkpoint, we train Self Forcing with DMD for 1,000 steps and measure diversity (mean pairwise LPIPS and $L_1$ across four seeds over 100 SpeakerVid samples) at each checkpoint. Both metrics drop during training, confirming that DMD training itself drives diversity collapse.}
\label{fig:collapse_curve}
\end{figure}

\begin{figure}[H]
\centering
\includegraphics[width=0.75\linewidth]{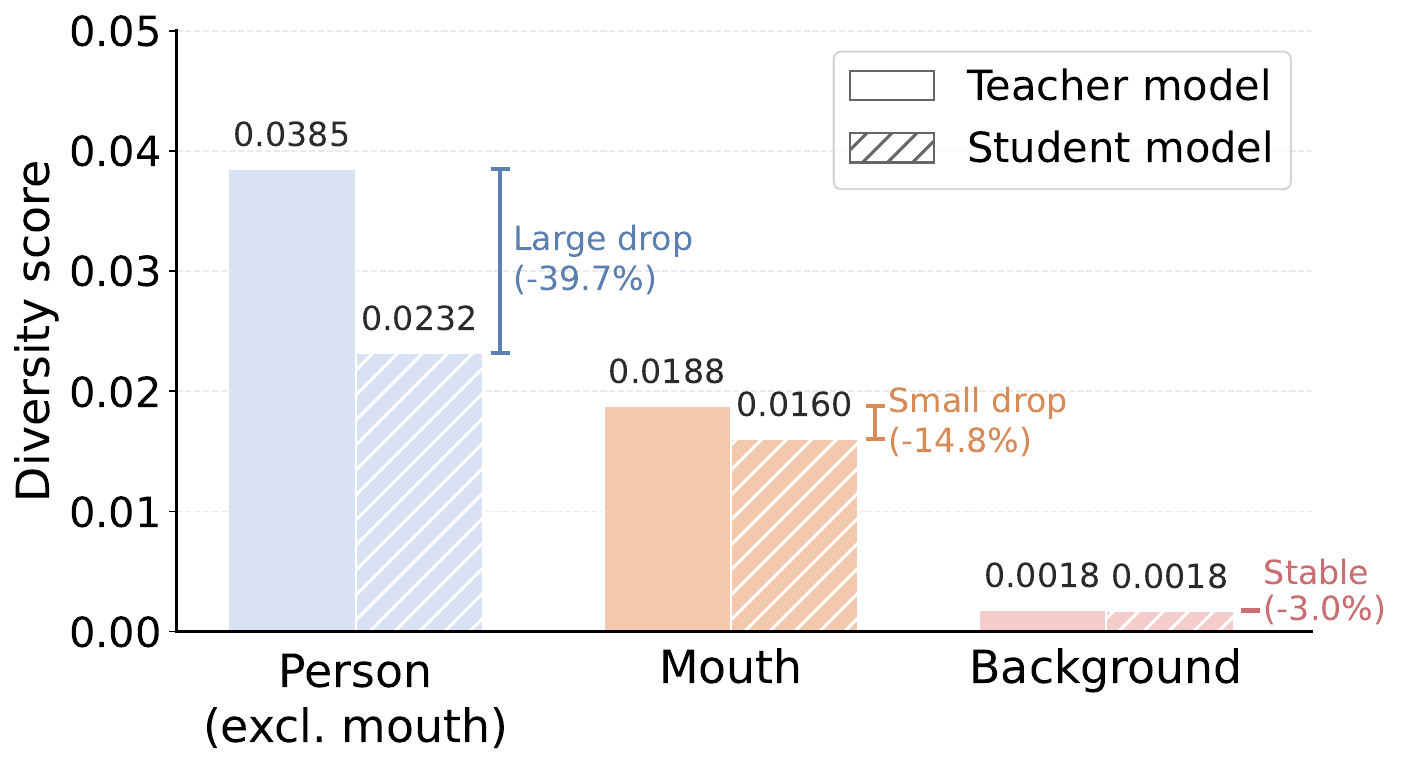}
\caption{\textbf{Regional diversity measured by $L_1$.} Mean pairwise $L_1$ distance across different random seeds for teacher and student models, confirming the same trend as the LPIPS-based analysis in \S\ref{sec:observation}: the person region suffers the largest diversity drop ($-39.7\%$), the mouth region shows a moderate drop ($-14.8\%$), and the background remains stable ($-3.0\%$).}
\label{fig:l1_diversity}
\end{figure}

\begin{figure}[H]
\centering
\includegraphics[width=\linewidth]{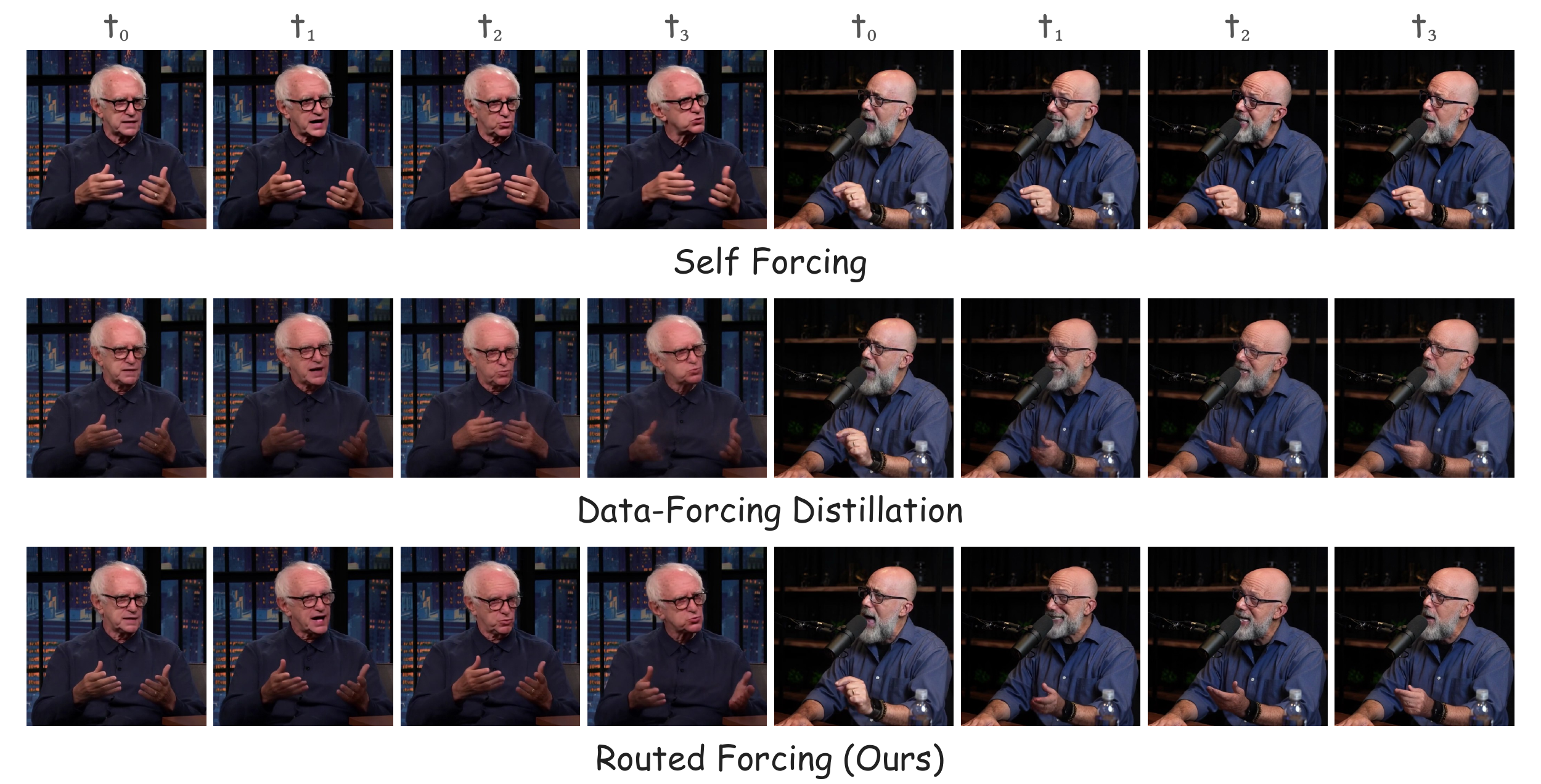}
\vspace{-1.7em}
\caption{\textbf{Additional qualitative comparison of training strategies.} Each column shows frames at the same timestep across methods. Routed Forcing produces more dynamic and diverse motion than Self Forcing while maintaining better visual quality than DFD.}
\label{fig:qualitative-appendix}
\end{figure}

\end{document}

%% file: math_commands.tex
\usepackage{amsmath,amsfonts,amssymb,bm}

\def\eqref#1{equation~\ref{#1}}

\def\1{\bm{1}}

\DeclareMathAlphabet{\mathsfit}{\encodingdefault}{\sfdefault}{m}{sl}
\SetMathAlphabet{\mathsfit}{bold}{\encodingdefault}{\sfdefault}{bx}{n}

